\title[Counterfactual Influence in MDPs]{Counterfactual Influence in Markov Decision Processes}
\pgfplotsset{compat=1.9, width=\columnwidth, height=6cm}
\newcommand{\jl}[1]{{\color{black}#1}}
\newcommand{\mk}[1]{{\color{black}#1}}
\DeclareMathOperator*{\argmax}{arg\,max}
\newcommand{\scm}{\mathcal{C}}
\newcommand{\ndovars}{\mathbf{V}}
\newcommand{\exovars}{\mathbf{U}}
\newcommand{\mdp}{\mathcal{M}}
\newcommand{\states}{\mathcal{S}}
\newcommand{\actions}{\mathcal{A}}
\newcommand{\probs}[1][]{%
\ifthenelse{\equal{#1}{}}{P_{\mdp}}{P_{\mdp(#1)}}%
}
\definecolor{highlightcol}{HTML}{fcf9d1}
\definecolor{highlightcol1}{HTML}{d1c964}
\tikzset{
  main node/.style={
    circle,
    draw,
    thick,
  },
}
\definecolor{darkgreen}{rgb}{0.0, 0.5, 0.0}
\begin{document}

\maketitle

\begin{abstract}%

Our work addresses a fundamental problem in the context of counterfactual inference for Markov Decision Processes (MDPs). Given an MDP path $\tau$, counterfactual inference allows us to derive counterfactual paths $\tau'$ describing \textit{what-if} versions of $\tau$ obtained under different action sequences than those observed in $\tau$. However, as the counterfactual states and actions deviate from the observed ones over time, \textit{the observation $\tau$ may no longer influence the counterfactual world}, meaning that the analysis is no longer tailored to the individual observation, resulting in interventional outcomes rather than counterfactual ones. This issue specifically affects the popular Gumbel-max structural causal model used for MDP counterfactuals, and yet, it has remained overlooked until now. In this work, we introduce a formal characterisation of influence based on comparing counterfactual and interventional distributions. We devise an algorithm to construct counterfactual models that automatically satisfy influence constraints. Leveraging such models, we derive counterfactual policies that are not just optimal for a given reward structure but also remain tailored to the observed path. Even though there is an unavoidable trade-off between policy optimality and strength of influence constraints, our experiments demonstrate that it is possible to derive (near-)optimal policies while remaining under the influence of the observation.
\end{abstract}

\begin{keywords}%
  Counterfactual Inference, Markov Decision Processes
\end{keywords}

\section{Introduction}

Counterfactual inference allows us to reason hypothetically about what effect changing an action or condition in the past would have on a given observation, e.g., 
\textit{``What would the patient's condition be, had we used treatment Y instead of treatment X?''}. 
This allows us to evaluate and optimise sequences of actions, by identifying how the outcome could have been improved by changing some action(s) along the observed path. This optimised counterfactual path can be seen as a \textit{counterfactual explanation} for how the decision-making process could be improved. Markov Decision Processes (MDPs) \jl{are particularly useful for modelling real-world decision-making processes under uncertainty}, so counterfactual inference can be used to generate counterfactual explanations for improving a given policy. This research area aligns with the increasing use of machine learning in healthcare, specifically in supporting clinical decision-making with the use of AI models~\citep{verma2020counterfactual,guidotti2022counterfactual,tsirtsis2021counterfactual}. In particular, we focus on discrete-state MDPs, a fundamental computation model used in e.g., clinical decision-making~\citep{BENNETT20139, SHIFRIN2020101917, patrick2011markov}, games~\citep{hafner2023mastering}, and planning tasks~\citep{natarajan2022planning}.

However, one previously neglected issue with performing counterfactual inference in sequential decision processes (which is not an issue in single-step scenarios) is that, after some number of steps, the observation may no longer inform the counterfactual transition probabilities of the counterfactual path. Consider an observed path $\tau=(s_1,a_1,s_2,a_2,s_3)$. Formally, each $s_{t+1}$ arises from a structural causal model (SCM), $s_{t+1} \sim f(s_t,a_t,\mathbf{U})$, where $\mathbf{U}$ are the (prior) random exogenous factors and $f$ is a deterministic function. We can apply counterfactual inference to determine, for example, which states we would have reached if we had instead performed action $a_1'\neq a_1$ at time $t=1$, and $a_2' \neq a_2$ at $t=2$. This involves deriving the posterior $\mathbf{U}'=\mathbf{U} \mid \tau$ and predicting $s_2' \sim f(s_1,a_1',\mathbf{U}')$ and $s_3' \sim f(s_2',a_2',\mathbf{U}')$. At $t=1$, the observed transition informs the counterfactual transition probabilities (e.g., as we observed $s_2$, this state would be more likely in the counterfactual world). However, if the counterfactual state $s_2'$ diverges from the observed $s_2$ so that the distribution of the counterfactual outcome $s_3'$ is not at all influenced by the observation $\tau$, the counterfactual probability for the transition $s_2', a_2' \rightarrow s_3'$ will be equal to its prior (interventional) probability, as if no observation occurred. This way, the counterfactual analysis is no longer tailored to the individual observation. 

In this paper, we formalise this notion of \textit{counterfactual influence} in MDPs. In particular, we say that an observed path influences the counterfactual world if the interventional probabilities (i.e., the transition probabilities of the nominal MDP under the counterfactual policy) and the corresponding counterfactual probabilities are not identical. Using this concept, we can ensure our policy analysis is based on counterfactual paths that are sufficiently informed by the observation.

\paragraph{Motivating Example} To illustrate why influence is important for counterfactual analysis, consider Figure \ref{fig:sepsis-mdp-with-trajectories}. This depicts a subset of the state space for the Sepsis MDP \citep{oberst2019counterfactual}, which simulates trajectories of sepsis patients. \jl{Each dot represents a unique state (which are unordered),
and each row corresponds to a time step in an MDP trajectory.} When treating sepsis, it is important to consider whether the patient has diabetes, as diabetic patients may respond differently to treatment (e.g., with more varying blood glucose levels) than non-diabetic patients. Figure \ref{fig:sepsis-mdp-with-trajectories} illustrates this difference in terms of how often both groups visit each MDP state: the spectrum from red to blue represents how frequently states are reached by diabetic patients (in red), compared with the whole population (in blue), and the intensity of the colour represents how frequently states are visited by both groups.

\begin{figure}[ht]
    \centering
    \begin{minipage}{0.50\textwidth} 
        \centering
        \resizebox{1.0\columnwidth}{!}{%
            \begin{tikzpicture}
                \node[anchor=north west] (svg) {\includesvg{imgs/plotwoarrow.svg}};

                \draw[-{Latex[length=4mm, width=2mm]}, thick] (0,0) -- (9,0) node[above, pos=0.5, yshift=0.0cm] {\Large MDP state space}; %
                \draw[-{Latex[length=4mm, width=2mm]}, thick] (0, 0) -- (0, -9) node[left, pos=0.5] {\Large t}; %
            \end{tikzpicture}
        }
    \end{minipage}%
    \hfill
    \begin{minipage}{0.5\textwidth}
        \resizebox{\textwidth}{!}{%
            \begin{tikzpicture}[scale=1.0, every node/.style={scale=1.0}]
              \draw[blue, line width=3pt] (3.5,1.0) -- (4.0,1.0);
              \node[right] at (4.0,1.0) {Unconstrained Counterfactual};

              \draw[black, line width=3pt] (-0.5, 1.0) -- (0.0,1.0);
              \node[right] at (0.0,1.0) {Observation};

              \draw[red, line width=3pt] (2.0,0.25) -- (2.5,0.25);
              \node[right] at (2.5,0.25) {Influenced Counterfactual};
            \end{tikzpicture}
        }
        \vspace{0.5cm} 
        \resizebox{\textwidth}{!}{ 
            \begin{tikzpicture}[scale=1.0, every node/.style={scale=1.0}]
                \node at (-3.5,1.0) {\parbox{4.0cm}{\centering States only visited by patients from whole population}};

                \shade[left color=blue, right color=red] (-1.5,0.75) rectangle (1.5,1.25);

                \node at (3.5,1.0) {\parbox{4.0cm}{\centering States only visited by diabetic patients}};
            \end{tikzpicture}
        }
        \vspace{0.5cm}
        \resizebox{\textwidth}{!}{ 
            \begin{tikzpicture}[scale=1.0, every node/.style={scale=1.0}]
                \node at (-3.5,0.0) {\parbox{4.0cm}{\centering State never visited in simulated trajectories}};

                \shade[left color=white, right color=black] (-1.5,-0.25) rectangle (1.5,0.25);

                \node at (3.5,0.0) {\parbox{4.0cm}{\centering State visited in every simulated trajectory}};
            \end{tikzpicture}
        }
    \end{minipage}

    \caption{Subset of the state space of the Sepsis MDP. The spectrum from blue to red represents how frequently the state appears in simulated paths of diabetic patients (in red) vs. the whole population (in blue). The intensity of the colour represents how frequently the states are visited in the simulated paths. The black line is an observed trajectory for a diabetic patient; the blue line is the unconstrained counterfactual generated for that path, and, in red, the influence-constrained counterfactual path. \jl{The unconstrained counterfactual path diverges further from the the observation than the influenced counterfactual, as the observation and influenced counterfactual reach the same state at multiple timesteps ($t=1$ and $t=6$), and the states along both paths are shaded with a similar hue and intensity of red, indicating these paths have comparable (high) likelihoods of occurring in diabetic patients (vs. the general population), unlike the unconstrained counterfactual which visits a completely disjoint set of states.}}
    \label{fig:sepsis-mdp-with-trajectories}
\end{figure}

Given an observed trajectory of a diabetic patient (the black line in Figure \ref{fig:sepsis-mdp-with-trajectories}), we can use counterfactual inference to find the optimal counterfactual path (with the highest cumulative reward), to determine how their treatment could have been improved. The optimal counterfactual path (blue line) loses the information from the observation that the patient is diabetic, as it visits states (in blue) that are typical for the population on average but unlikely for diabetic patients. However, by constraining the optimal counterfactual path to retain influence from the observation, we obtain a counterfactual path (red line) more likely for a diabetic patient. Importantly, both counterfactual paths significantly improve on the observed path, as the patient has fewer vitals out of range at most time steps. Still, only the influenced path is tailored to the observation, albeit slightly sub-optimal compared to the unconstrained path. \jl{Also, if we derive counterfactual policies based on areas of the counterfactual MDP that are not informed/influenced by the observation, we risk identifying policies that would be optimal for the general population and not optimal for the observed (diabetic) patient. In Appendix \ref{app: simulation evidence}, we provide simulation evidence of this effect in the Sepsis MDP.}

Although this is a simplified example (as the diabetic status of a patient would be known, and their treatment can be adjusted accordingly), there may be situations where there is an underlying difference between sub-populations (e.g., patients with different genotypes) which is unknown. In these cases, it is beneficial to restrict counterfactual paths to ensure they remain tailored to the observations. 

\textbf{Contributions.} 
Leveraging our formal definition of influence, we propose an approach to derive counterfactual policies that are not only optimal for a given reward structure, but also maintain a degree of influence from the observed trajectory. This ensures that the counterfactuals remain tailored to the given observation. We build on recent work by~\citet{oberst2019counterfactual} which introduces Gumbel-max SCMs, a class of causal models for counterfactual inference of discrete-state MDPs, and~\citet{tsirtsis2021counterfactual}, which leverages Gumbel-max SCMs to derive an alternative sequence of actions that optimises the counterfactual outcome with a limit on the number of actions that can be changed. However, these and other existing papers on counterfactual analysis of MDPs, e.g. \citep{buesing2018woulda, lu2020sample,kazemi2023causal}, ignore the issue of influence, resulting in counterfactuals that may be (erroneously) disconnected from the observation. \jl{To the best of our knowledge, we are the first to identify this issue of counterfactual influence in sequential decision-making processes, and our paper is intended to support the expanding field of work applying counterfactual inference to MDPs by addressing this problem.} A thorough discussion of related work can be found in Appendix \ref{app:related work}. 

Procedurally, we build a counterfactual MDP that inherently satisfies influence constraints, through a polynomial-time algorithm that prunes the non-influenced transitions from the counterfactual MDP. We note that imposing this constraint may be too stringent, and results in counterfactual MDPs that are close or equal to the observed path. Therefore, we further extend the notion of influence to encompass multiple steps, so that influence constraints must hold at least once (as opposed to always) over paths of a given length. In this way, we can arbitrarily relax influence constraints, allowing for larger deviations from the observation, to favour optimality of the counterfactual policy.

We validate our approach on a Grid World model and two health-related case studies: an epidemic model and a sepsis model. We evaluate how relaxing the influence constraint impacts the derived optimal policy, and find that our approach results in 
counterfactual paths that are optimal or near-optimal while remaining influenced by the observed path. 
This results in more informative counterfactual explanations for improving a given policy, as these explanations are guaranteed to be tailored toward the given observation.

\section{Preliminaries\label{sec:prelim}}
MDPs are a class of stochastic models to describe sequential decision-making processes. In an MDP $\mdp$, at each step $t$, an agent in state $s_t$ performs some action $a_t$ determined by a policy $\pi$, ending up in state $s_{t+1} \sim \probs(s \mid s_t, a_t)$. The agent receives some reward $R(s_t,a_t)$ for performing $a_t$ at $s_t$. Formally, an MDP is a tuple $\mdp=(\states,\actions,\probs,P_I,R)$ where $\states$ is the discrete \emph{state space}, $\actions$ is the set of \emph{actions}, $\probs: (\states \times \actions \times \states) \rightarrow [0,1]$ is the \emph{transition probability} function, $P_I:\states \rightarrow [0,1]$ is the \textit{initial state distribution}, and $R: (\states \times \actions) \rightarrow \mathbb{R}$ is the \emph{reward function}. 
A (deterministic) \emph{policy} $\pi$ for $\mdp$ is a function $\pi:\states\rightarrow\actions$. A path $\tau$ of $\mdp$ under policy $\pi$, denoted $\tau \sim \mdp(\pi)$, is a sequence $\tau=(s_0, a_0), (s_1, a_1),\ldots,(s_{T-1},a_{T-1})$ where $T=|\tau|$ is the path length, $P_I(s_0)>0$, $a_t=\pi(s_t)$ for all $t=0,\ldots,T-1$, and $\probs(s_{t+1}\mid s_t, a_t)>0$ for all $t=0,\ldots,T-2$. 

While the MDP characterisation helps make predictions about future states and design action policies, it is not sufficient to make counterfactual predictions. For this, we require \textit{structural causal models (SCMs)}~\citep{pearl_2009}.  
Formally, an SCM is a tuple $\scm=(\exovars,\ndovars,\mathcal{F}= \{f_V\}_{V \in \ndovars},P(\exovars))$, where
$\exovars$ is a set of mutually independent \emph{exogenous variables} with $P(\exovars)$ being its distribution, and $\ndovars$ is a set of \emph{endogenous variables}. The value of each $V \in \ndovars$ is determined by a function $V = f_V(\mathbf{PA}_V, U_V)$ where $\mathbf{PA}_V \subseteq \ndovars$ is the set of direct causes of $V$ and $U_V\in \exovars$. %

Exogenous variables are unobserved and act as the source of randomness in the model. For a fixed realisation $\mathbf{u}\sim P(\exovars)$, i.e., a concrete unfolding of the system's randomness, the values of $\ndovars$ become deterministic, as they are uniquely determined by $\mathbf{u}$ and $\mathcal{F}$. 
The value $\mathbf{u}$ is also called \textit{context}. 
With SCMs, one can establish the causal effect of some input variable $X$ on some outcome $Y$ by evaluating $Y$ after applying an \textit{intervention} $X\gets x$, i.e., after replacing the RHS of $X = f_X(\mathbf{PA}_X, U_X)$ with $x$. We denote the resulting \textit{interventional distribution} by $P_{\scm[x]}(Y)$ (also written as $P(Y \mid \mathit{do}(x))$ in Pearl's \textit{do} notation).

Upon observing a realisation $\mathbf{v}$ of the SCM variables $\ndovars$, counterfactuals answer the following question: \textit{what would have been the value of variable $Y$ under intervention $X\gets x$, given that we observed  $\mathbf{v}$?} This corresponds to evaluating $\ndovars$ in a hypothetical world characterised by the same context that generated the observation $\mathbf{v}$ but under a different causal process (i.e., $\scm[x]$ instead of $\scm$).  
Computing counterfactuals first requires deriving the context that led to the observation, i.e., $\exovars \mid \mathbf{v}$, then evaluating the outcome under the counterfactual model obtained from $\scm[x]$ by replacing $P(\exovars)$ with the inferred $P(\exovars \mid \mathbf{v})$. 
Since each observation $\mathbf{v}$ can be seen as a deterministic function of a particular context $\mathbf{u}$, then in theory, the counterfactual outcome should be deterministic too. However, as we will see, $\mathbf{u}$ cannot be always uniquely identified from $\mathbf{v}$, resulting in a (non-Dirac) posterior distribution $P(\exovars \mid \mathbf{v})$ and a stochastic counterfactual outcome.  

\subsection{SCM-based Encoding of MDPs}\label{sec:scm_mdp_encoding}
\begin{wrapfigure}{r}{0.3\columnwidth}
    \centering
    \vspace{-4\intextsep}
            \resizebox{0.15\columnwidth}{!}{%
            \begin{circuitikz}
    \tikzstyle{every node}=[font=\Huge]

    \node[circle, draw, minimum size=2cm] (S_tp1) at (0, 0) {\Huge $S_{t+1}$};
    \node[circle, draw, minimum size=2cm] (S_t) at (-3, 1.5) {\Huge $S_t$};
    \node[circle, draw, minimum size=2cm] (A_t) at (-3, -1.5) {\Huge $A_t$};
    \node[circle, draw, dashed, minimum size=2cm] (U_t) at (0, 3) {\Huge $U_t$};

    \draw[->, thick, >=Stealth] (S_t) -- (S_tp1);
    \draw[->, thick, >=Stealth] (S_t) -- (A_t);
    \draw[->, thick, >=Stealth] (A_t) -- (S_tp1);
    \draw[->, thick, >=Stealth] (U_t) -- (S_tp1);
\end{circuitikz}
        }
        \caption{MDP causal graph}
    \label{fig:MDP_DAG}
\end{wrapfigure}
To enable counterfactual reasoning in MDPs, \citet{oberst2019counterfactual} proposed the following SCM-based encoding. For an MDP $\mdp$, policy $\pi$, and horizon $T$, we define an SCM over variables $\{S_t,A_t\}_{t=0}^{T-1}$ (which describes paths of length $T$ induced by $\mdp$ and $\pi$) with the following structural equations:
\begin{equation}\label{eq:mdp_scm}
    S_{t+1} = f(S_{t},A_{t}, U_{t}); \ \ A_t = \pi(S_t); 
    \ \ S_0 = f_0(U_0)
\end{equation}

This encoding allows us to compute counterfactuals, but it is not obvious how to define $f$ (and $f_0$) when dealing with categorical variables (arising from the MDP's discrete state space). To this purpose, \citet{oberst2019counterfactual} introduced \textit{Gumbel-Max SCMs}, expressed as
\begin{equation}\label{eq:gumbel-max-scm}
     S_{t+1} = f(S_{t},A_{t}, U_{t}=(G_{s,t})_{s\in \states}) = \argmax_{s\in \states}\left\{\log\left(\probs(s \mid S_t, A_t)\right) +G_{s,t}\right\}
\end{equation}
where for each $s\in \states$ and $t= 0,\ldots,T-1$, $G_{s,t}$ follows a Gumbel distribution. 
This approach is grounded in the Gumbel-Max trick, which shows that sampling from a categorical distribution with $k$ categories is equivalent to sampling $k$ copies $g_0,\ldots,g_k$ of a standard Gumbel and then determining the outcome as $\argmax_{j}\left\{\log\left(P(Y=j)\right)+ g_j\right\}$. 
Notably, the {Gumbel-Max SCM} encoding possesses a desirable feature called \textit{counterfactual stability}, which, informally, states that a counterfactual outcome remains equal to the observed outcome unless the intervention increases the relative probability of an alternative outcome\footnote{Counterfactual stability doesn't hold for instance if we encode the MDP using the inverse CDF trick.}. See \citep{oberst2019counterfactual} for more details. 

We note that, although the Gumbel-Max SCM is not the only SCM capable of expressing categorical distributions \citep{zhang2022partial}, such as those in MDPs, it remains the most popular causal model for MDPs \citep{lorberbom2021learning, benz2022counterfactual, noorbakhsh2022counterfactual, killian2022counterfactually, zhu2020counterfactual, NEURIPS2023_09ae6bea}.

\paragraph{Counterfactual inference.}
Given an MDP path $\tau=(s_0,a_0),\ldots,(s_{T-1},a_{T-1})$, counterfactual inference requires identifying the values of the Gumbel exogenous variables that align with $\tau$, i.e., calculate $\mathbf{G}'= (G_{s,t})_{s\in \states}^{t=0,\ldots,T-1}\mid \tau$. Because MDPs are Markovian, we can infer the Gumbel values for each observed transition in $\tau$ independently. However, because the mechanism of~\eqref{eq:gumbel-max-scm} is non-invertible (i.e., for given $s_t$ and $a_t$, multiple sets of $(G_{s,t})_{s\in \states}$ can result in the same $s_{t+1}$), we cannot uniquely identify the Gumbel values. Instead, as proposed by~\citet{oberst2019counterfactual}, we can achieve (approximate) posterior inference of $P((G_{s,t})_{s\in \states} \mid s_t,a_t,s_{t+1})$ through \textit{rejection sampling}: we draw samples from the prior $(g_{s,t})_{s\in \states}\sim P((G_{s,t})_{s\in \states})$ and discard all instances where $f(s_t,a_t,(g_{s,t})_{s\in \states})\neq s_{t+1}$. This can also be implemented more efficiently using the top-down Gumbel sampling approach described in~\citep{maddison2014sampling}.

\subsection{Counterfactual MDP and Optimal Policies}\label{sec:counterfactual mdp}
Given any MDP $\mdp$ (with known transition probabilities $P_\mdp$) and observed path $\tau$, we can define a corresponding \textit{counterfactual MDP} $\mdp^\tau$ which captures the counterfactual probabilities at any choice of state $s$ and action $a$. $\mdp^\tau$ will be a non-stationary MDP with the same state space $\states$, action space $\actions$, and reward structure $R$ as $\mdp$. Its initial state distribution $P^\tau_I$ assigns probability $1$ to $s_0$, the first state of $\tau$; and its transition probabilities directly follow from the SCM~\eqref{eq:gumbel-max-scm} and are defined, for $t=0,\ldots,T-1$ and $\forall s' \in \mathcal{S}$, as 

\begin{multline}
\label{eq:cf_mdp_probs}
     P_{\mdp,t,\tau}(s' \mid s, a) = P\left(s' = \argmax_{q \in \states} \left\{\log\left(\probs(q \mid s, a)\right) + 
     G'_{q,t}\right\}\right) \\
     \approx \dfrac{1}{N} \sum_{j=0}^{N} \mathbbm{1}\left(s' = \argmax_{q \in \states} \left\{\log\left(\probs(q \mid s, a)\right) + 
     G_{q,t}^{\prime(j)}\right\}\right)
\end{multline}
where we sample $N$ values $G_{q,t}^{\prime(j)}$ from the true posterior distribution $G'_{q,t}$ using either the rejection sampling or top-down sampling approach.  The indicator function $\mathbbm{1}(\mathbbm{X})$ takes the value $1$ if the condition $\mathbbm{X}$ is satisfied and $0$ otherwise. $\mdp^\tau$ can be directly solved to derive optimal counterfactual policies, as done in~\citep{tsirtsis2021counterfactual}. In that paper, the authors are concerned with finding optimal action sequences that deviate from the observed path by at most $m$ actions. We call this an \textit{$m$-CF policy}:

\begin{definition}[$m$-CF policy~\citep{tsirtsis2021counterfactual}]\label{def:m-policy}
    Let $\tau$ be a path of an MDP $\mdp$ of length $T$, and let $\mdp^\tau$ be the corresponding counterfactual MDP. For a given $m\leq T$, an \emph{$m$-CF policy} $\pi^*$ is one that maximises the value $V_{\tau}(\pi)=\mathbb{E}_{\tau' \sim \mdp^\tau(\pi)} \left[ \sum_{t=0}^{T-1} R(s'_t, a'_t) \right]$ under the condition that any counterfactual path $\tau'$ induced by $\mdp^\tau$ and $\pi^*$ satisfies $\sum_{t=0}^{T-1} \mathbbm{1}(a_t \neq a'_t) \leq m$.
\end{definition}

To derive $m$-CF policies, \citet{tsirtsis2021counterfactual} employ a polynomial-time dynamic programming algorithm that keeps track of the number of action changes between the observed path $\tau$ and the counterfactual one $\tau'$. However, their approach has two main shortcomings. First,  $\tau'$ may diverge from $\tau$, visiting different state sequences where it may not be sensible to impose the same observed action. Second, and most important, because of this divergence, it is possible to reach a state in $\tau'$ where the observed path $\tau$ bears no longer influence, meaning that we are no longer computing counterfactuals but interventional outcomes. In the next sections, we characterise and solve the latter issue.

\section{Theoretical Framework}
\label{sec: theory}
This section introduces a formal notion of \emph{influence}, to describe how an observed path $\tau$ affects the counterfactual world. For counterfactual MDPs derived using Gumbel-max SCMs (see \eqref{eq:cf_mdp_probs}), the counterfactual 
distribution $P_{\mdp,t,\tau}$ will be identical to the nominal/interventional distribution $P_{\mdp}$ when there is no influence. Before formally defining influence, we first derive a sufficient condition for these two distributions to be equal when using Gumbel-max SCMs:

\begin{proposition}
Let $\tau$ be a path of an MDP $\mdp$ of length $T$, and let $\mdp^\tau$ be the corresponding counterfactual MDP. Given a time $t<T$  and counterfactual state $s'_t$ and action $a'_t$ in $\mdp^\tau$, then if $P_{\mdp}(\cdot \mid s_t, a_t)$ and $P_{\mdp}(\cdot \mid s'_t, a'_t)$ have disjoint support, then the counterfactual distribution $P_{\mdp,t,\tau}(\cdot \mid s'_t, a'_t)$ is identical to the interventional distribution $P_{\mdp}(\cdot \mid s'_t, a'_t)$. 
\label{prop: no influence disjoint supports}
\end{proposition}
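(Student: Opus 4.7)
The plan is to reduce the claim to the mutual independence of the Gumbel noise variables $G_{s,t}$ across states, together with the observation that the Gumbel-max argmax in~\eqref{eq:gumbel-max-scm} depends only on those $G_{s,t}$ whose associated transition probability is strictly positive. First I would recall from~\eqref{eq:cf_mdp_probs} that the counterfactual transition $P_{\mathcal{P},t,\tau}(\cdot\mid s'_t, a'_t)$ is obtained by feeding the posterior Gumbels $(G'_{s,t})_{s\in\states}\sim P((G_{s,t})_{s\in\states}\mid s_t,a_t,s_{t+1})$ into the Gumbel-max mechanism driven by $\log \probs(\cdot\mid s'_t, a'_t)$, whereas the interventional transition $\probs(\cdot\mid s'_t, a'_t)$ is obtained by feeding prior Gumbels into the very same mechanism (this is just the Gumbel-max trick). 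Hence it suffices to show that, restricted to the Gumbels that actually influence this argmax, the posterior and prior laws coincide.

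Next I would exploit two structural facts about the Gumbel-max construction. (i) For any state $s\notin \mathrm{supp}(\probs(\cdot\mid s'_t, a'_t))$ we have $\log \probs(s\mid s'_t, a'_t)=-\infty$, so $s$ cannot achieve the argmax regardless of $G_{s,t}$; thus the counterfactual argmax at $(s'_t,a'_t)$ is a function only of $\{G_{s,t}:s\in \mathrm{supp}(\probs(\cdot\mid s'_t, a'_t))\}$. (ii) By the same reasoning, the conditioning event $\{S_{t+1}=s_{t+1}\mid S_t=s_t, A_t=a_t\}$ is determined only by $\{G_{s,t}:s\in \mathrm{supp}(\probs(\cdot\mid s_t, a_t))\}$. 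The disjoint-support hypothesis asserts precisely that the two index sets in (i) and (ii) do not intersect.

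Because the Gumbels $(G_{s,t})_{s\in\states}$ are mutually independent by construction of the SCM, conditioning on an event that is measurable with respect to the ``observed-side'' Gumbels leaves the joint law of the ``counterfactual-side'' Gumbels unchanged: the posterior marginal of $\{G_{s,t}:s\in \mathrm{supp}(\probs(\cdot\mid s'_t, a'_t))\}$ coincides with its prior marginal (a product of standard Gumbels). Plugging this back into~\eqref{eq:cf_mdp_probs} and comparing with the interventional Gumbel-max expression for $\probs(\cdot\mid s'_t, a'_t)$ yields the desired equality.

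The only mildly delicate step is the $-\infty$ handling underlying (i)--(ii); I would make it rigorous by adopting the standard convention that the argmax is restricted to the support of the log-probability vector, so that Gumbels indexed by zero-probability outcomes are genuinely irrelevant. Apart from this bookkeeping, the argument is essentially a one-line application of the independence of the Gumbel noise, and I do not anticipate a substantive obstacle.
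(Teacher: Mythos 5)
Your proposal is correct and follows essentially the same route as the paper's proof: both arguments rest on the fact that, under disjoint supports, the Gumbel variables governing the counterfactual argmax at $(s'_t,a'_t)$ are never constrained by the observation of $s_{t+1}$ from $(s_t,a_t)$, so their posterior equals their prior and the counterfactual transition law collapses to the interventional one. Your write-up merely makes explicit the mutual-independence and measurability bookkeeping that the paper compresses into the phrase ``no observation can change their prior.''
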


\begin{proof}
At time $t$, if the distributions $P_{\mdp}(\cdot \mid s_t, a_t)$  and $P_{\mdp}(\cdot \mid s'_t, a'_t)$ have disjoint supports then the posterior Gumbel distribution relative to the possible next states from $(s'_t, a'_t)$ remains the same as the prior Gumbel distribution for those states. This is because the states in the support of $P(\cdot \mid s'_t, a'_t)$ cannot be reached from $(s_t, a_t)$ in the factual world (i.e., in the original MDP), hence, no observation can change their prior. As a consequence,  by~\eqref{eq:gumbel-max-scm} and~\eqref{eq:cf_mdp_probs}, the counterfactual probabilities for those states remain equal to the probabilities in the original MDP. 
\end{proof}

With this proposition, we can define influence by checking a simple condition on the transition probabilities of the original MDP. Importantly, this condition is precise because the probabilities $P_{\mdp}$ are known. On the other hand, the equality $P_{\mdp,t,\tau}(\cdot \mid s'_t, a'_t) = P_{\mdp}(\cdot \mid s'_t, a'_t)$ cannot be established precisely because of the sampling error introduced in $P_{\mdp,t,\tau}$ by the Gumbel posterior inference\footnotemark.

\footnotetext{It is possible to define other notions of influence that are more quantitative, e.g., based on the statistical distance between the transition probabilities. This is discussed in more detail in Appendix \ref{app: probabilistic influence}.}

\begin{definition}[1-step influence]\label{def:immediate_influence}
Let $\tau$ be a path of an MDP $\mdp$ of length $T$, and let $\mdp^\tau$ be the corresponding counterfactual MDP. Given a time $t<T$ and counterfactual state $s'_t$ and action $a'_t$ in $\mdp^\tau$, we say that $\tau$ exerts an immediate ($1$- step) influence on $s'_t$ and $a'_t$ at time $t$ if and only if the supports of the distributions $P_{\mdp}(\cdot \mid s'_t, a'_t)$ and $P_{\mdp}(\cdot \mid s_t, a_t)$ are not disjoint. 
\end{definition}

While imposing influence constraints is clearly desirable, the above notion of $1$-step influence may be too strict and we may not be able to deviate much, or at all, from the observed path. To overcome this potential limitation, we relax and generalise the notion of $1$-step influence to encompass multiple steps, such that influence constraints must hold at least once in a counterfactual path, and not at every step\footnote{\jl{The issue of (lack of) influence is especially pronounced in sequential processes (that over time, may drift away from the observed path). However, this problem can also in a static (i.e., $1$-step) setting. Consider an action $a’$ that, when applied to the initial observed state $s_0$ (rather than the observed action $a_0$) leads to a next state distribution $P(\cdot | s_0, a’)$ that has disjoint support w.r.t. $P(\cdot \mid s_0, a_0)$. By Proposition \ref{prop: no influence disjoint supports}, this implies that the counterfactual distribution will be equal to the interventional one, i.e., the observation $(s_0,a_0,s_1)$ doesn’t inform the counterfactual outcome for $(s_0,a’)$.}}.

\begin{definition}[$k$-step influence]\label{def:k_influence}
Let $\mdp$, $T$, $\tau$, and $\mdp^\tau$ be as in Definition~\ref{def:immediate_influence}. Given a time $t<T$, horizon $k$, and counterfactual state $s'_t$ and action $a'_t$, if $t+k \leq T$ we say that $\tau$ exerts a $k$-step influence on $s'_t$ and $a'_t$ at time $t$ if there exists a path $\tau'$ of $\mdp^\tau$ of length $k$ starting in $(s'_t,a'_t)$ and such that $\tau$ exerts a $1$-step influence on at least one state of $\tau'$. If $t+k > T$, $\tau$ always exerts a $k$-step influence on $s'_t$ and $a'_t$ at time $t$.
\end{definition}

\begin{remark}
\label{remark:k-influence}
    In the above definition, when $t+k > T$, the length $T$ of the observed path is not sufficient to determine $k$-step influence, and so we make the conservative assumption that influence holds in such cases. In particular, for $k\geq T+1$, we have that the observed path $\tau$ exerts a $k$-step influence on any counterfactual state at any time $t$ (because when $k\geq T+1$, then $t+k > T$ for any $t$).
\end{remark}
Figures~\ref{fig:one-step} and ~\ref{fig:two-step} \jl{depict a rollout of} a toy counterfactual MDP with two possible actions, $a_0$ and $a_1$, at states $s_0$ and $s_3$, and only action $a_0$ at the rest of the states. Given the observed path $\tau = [(s_0, a_0), (s_2, a_0), (s_5, a_0), (s_7, a_0)]$, we want to find the influence-constrained counterfactual MDP given $k=1$ and $k=2$. When $k=1$ (Figure~\ref{fig:one-step}), $(s_1, a_0)$ and $(s_3, a_1)$ are not influenced at $t=1$, because they have disjoint support with the observed state-action pair $(s_2, a_0)$. 
For the opposite reason, we have that $(s_2, a_0)$ and $(s_3, a_0)$ are influenced at $t=1$. $(s_4, a_0)$, $(s_5, a_0)$ and $(s_6, a_0)$ are all influenced at $t=2$, as these have overlapping support with the observed pair $(s_5, a_0)$. However, even though $(s_4, a_0)$ and $(s_6, a_0)$ are influenced, they cannot be reached from any influenced state-action pairs, so are also removed from the influence-constrained counterfactual MDP: we say they are ``influenced but unreachable''.
Figure \ref{fig:two-step} depicts the case of 2-step influence. We note that $(s_6, a_0)$ is now reachable, because $(s_3, a_1)$ is influenced at $t=1$ with 2-step influence. However, even though $(s_1, a_0)$ now becomes influenced at $t=1$, it cannot be reached by any influenced state-action pair, so $(s_1, a_0)$ and $(s_4, a_0)$ are influenced but unreachable.

\begin{figure}[ht]
    \centering
    \resizebox{0.45\textwidth}{!}{ %
    \begin{tikzpicture}
        \matrix [draw, below=1cm of current bounding box.south] {
            \node [main node, fill=black!50,label=right:Observed] (l1) {}; \pgfmatrixnextcell
            \node [main node, fill=black!25,label=right:Influenced and reachable] (l2) {}; \\
            \node [main node,label=right:Influenced but unreachable] (l3) {}; \pgfmatrixnextcell
            \node [main node, gray,label=right:Not $k$-step influenced] (l4) {}; \\
        };
    \end{tikzpicture}
    }

    \subfigure[$k=1$]{%
         \resizebox{0.25\textwidth}{!}{\begin{tikzpicture}[->,>=stealth,node distance=1.2cm,thick,main node/.style={circle,draw}, ]

      \node[main node, fill=black!50] (s0) {$s_0$};
      \node[main node, fill=black!50] (s2) [right of=s0] {$s_2$};
      \node[main node, gray] (s1) [below of=s2] {$s_1$};
      \node[main node, fill=black!25] (s3) [above of=s2] {$s_3$};
      \node [above] at (s3.north) (t1) {t=1};
      \node [] (t0) [left of=t1] {t=0}; %
      \node[main node] (s4) [right of=s1] {$s_4$};
      \node[main node, fill=black!50] (s5) [right of=s2] {$s_5$};
      \node[main node] (s6) [right of=s3] {$s_6$};
      \node [above] at (s6.north) (t2) {t=2};
      \node[main node, fill=black!50] (s7) [right of=s5] {$s_7$};
      \node [] (t3) [right of=t2] {t=3}; %
      \draw[gray] (s0) -- (s1) node[midway,right] {$a_1$};
      \draw[ultra thick] (s0) -- (s2) node[midway, above] {$a_0$};
      \draw (s0) -- (s3) node[midway,right] {$a_0$};
      \draw[gray] (s1) -- (s4) node[midway,above] {$a_0$};
      \draw[ultra thick] (s2) -- (s5) node[midway,above] {$a_0$};
      \draw[gray] (s3) -- (s6) node[midway, above] {$a_1$};
      \draw (s3) -- (s5) node[midway, right] {$a_0$};
      \draw (s4) -- (s7) node[midway, right] {$a_0$};
      \draw[ultra thick] (s5) -- (s7) node[midway, above] {$a_0$};
      \draw (s6) -- (s7) node[midway, right] {$a_0$};

    \end{tikzpicture}}
        \label{fig:one-step}
    }
    \hspace{0.05\textwidth} %
    \subfigure[$k=2$]{%
         \resizebox{0.25\textwidth}{!}{\begin{tikzpicture}[->,>=stealth,node distance=1.2cm,thick,main node/.style={circle,draw}]

      \node[main node, fill=black!50] (s0) {$s_0$};
      \node[main node, fill=black!50] (s2) [right of=s0] {$s_2$};
      \node[main node] (s1) [below of=s2] {$s_1$};
      \node[main node, fill=black!25] (s3) [above of=s2] {$s_3$};
      \node [above] at (s3.north) (t1) {t=1};
      \node [] (t0) [left of=t1] {t=0}; %

      \node[main node] (s4) [right of=s1] {$s_4$};
      \node[main node, fill=black!50] (s5) [right of=s2] {$s_5$};
      \node[main node, fill=black!25] (s6) [right of=s3] {$s_6$};
      \node [above] at (s6.north) {t=2};
      \node[main node, fill=black!50] (s7) [right of=s5] {$s_7$};
      \node [] (t3) [right of=t2] {t=3}; %
      \draw[gray] (s0) -- (s1) node[midway,right] {$a_1$};
      \draw[ultra thick] (s0) -- (s2) node[midway, above] {$a_0$};
      \draw (s0) -- (s3) node[midway,right] {$a_0$};
      \draw (s1) -- (s4) node[midway,above] {$a_0$};
      \draw[ultra thick] (s2) -- (s5) node[midway,above] {$a_0$};
      \draw (s3) -- (s6) node[midway, above] {$a_1$};
      \draw (s3) -- (s5) node[midway, right] {$a_0$};
      \draw (s4) -- (s7) node[midway, right] {$a_0$};
      \draw[ultra thick] (s5) -- (s7) node[midway, above] {$a_0$};
      \draw (s6) -- (s7) node[midway, right] {$a_0$};

\end{tikzpicture}}
        \label{fig:two-step}
    }

    \caption{Example counterfactual MDP given $k$-step influence. State-action pairs may or may not be influenced by the observed path, and states may or may not be reachable from other influenced state-action pairs.}
\end{figure}

We provide more details on the construction of the counterfactual MDPs in Figures \ref{fig:one-step} and \ref{fig:two-step} in Appendix \ref{app: influence constrained mdps}. We now re-formulate the idea of optimal counterfactual policy by incorporating our notion of influence. Previously, in Definition~\ref{def:m-policy}, we restricted to policies ensuring a bounded number of action changes from the observed action sequence. 
Here, we further include constraints to guarantee that any counterfactual path is influenced to some degree by the observation.  

\begin{definition}[$(k, m)$-CF policy]
Let $\tau$ be a path of an MDP $\mdp$ of length $T$, and let $\mdp^\tau$ be the corresponding counterfactual MDP. For a given $m\leq T-1$ and influence bound $k$, a \emph{$(k, m)$-CF policy} $\pi^*$ is one that maximises the value $V_{\tau}(\pi)=\mathbb{E}_{\tau' \sim \mdp^\tau(\pi)} \left[ \sum_{t=0}^{T-1} R(s'_t, a'_t) \right]$ under two conditions:  1) the observed path $\tau$ exerts a $k$-step influence on any counterfactual path $\tau'$ induced by $\mdp^\tau$ and $\pi^*$; and 2) any such counterfactual path satisfies the constraint $\sum_{t=0}^{T-1} \mathbbm{1}(a_t \neq a'_t) \leq m$.
\end{definition}
By Remark \ref{remark:k-influence}, if $k \geq T+1$, then a $(k,m)$-CF policy corresponds exactly to a $m$-CF policy, as defined in Definition \ref{def:m-policy}, because the influence constraint will always be trivially satisfied ($t + k > T$, for all $t$).

\section{Methodology}
\label{sec:methodology}
In this section, we describe the steps of our algorithm for finding the optimal $(k, m)$-CF policy for a given MDP. The pseudocode for the whole algorithm can be found in Appendix \ref{app:pseudocode}. To achieve this, we first constrain the counterfactual MDP to only transitions that are influenced by the observed path (under a given $k$-step influence), then apply value iteration to find the optimal policy in the influence-constrained counterfactual MDP. Both these steps have polynomial complexity, and so the whole algorithm remains polynomial as well.

First, we calculate the counterfactual transition probabilities $P_{\mdp,t,\tau}$ for all transitions using the top-down Gumbel sampling approach \citep{maddison2014sampling}, as described in Sections \ref{sec:scm_mdp_encoding} and \ref{sec:counterfactual mdp}. To construct the influence-constrained counterfactual MDP, we first identify, for each state-action $(s_t, a_t)$ pair in the observed path $\tau$, all states $s$ that are in the support of $P(\cdot \mid s_t, a_t)$. We denote the set of such states with $S^\tau_t$, and denote the union of all of these $S^\tau_t$ as $S^\tau = \bigcup_{t=0}^{T-1} S^\tau_t$. For the example MDP given in Figures~\ref{fig:one-step} and~\ref{fig:two-step}, $S^\tau = \{s_2, s_3, s_5, s_7\}$. Next, we execute a reverse Breadth-First Search (BFS) algorithm with a maximum depth of $k$ over the original MDP, starting from each state in $S^\tau$. This identifies the set $S^{\tau, k}$ of MDP states that can reach, within $k$ steps, a state in $S^\tau$, i.e., a state that is influenced by $\tau$. As shown in Figure \ref{fig:one-step}, $S^{\tau, 1} = \{s_0, s_2, s_3, s_4, s_5, s_6, s_7\}$, and, as shown in Figure \ref{fig:two-step}, $S^{\tau, 2} = \{s_0, s_1, s_2, s_3, s_4, s_5, s_6, s_7\}$. Since the BFS algorithm is polynomial in the number of states and transitions, and we run the algorithm $|S^\tau|\leq |S|$ times, the worst-case computational complexity of the algorithm remains polynomial ($O(|S^{\tau}| \cdot (|S| + (|S|^2 \cdot |A|)))$).

To prune the non-influenced state-action pairs, we set $P_{\mdp,t,\tau}(\cdot \mid s, a)=0$ whenever there exists a $s'$ where $P(s' \mid s, a) > 0$ in the original MDP, $s' \not \in S^{\tau, k}$ and $t + k \leq T$ (as stated in Definition \ref{def:k_influence}, all state-action pairs are influenced by the observation when $t+k > T$). This ensures that no action can lead to states outside the influence-constrained counterfactual MDP. Finally, we must prune any states that are unreachable or have no outgoing transitions, by setting $P_{\mdp,t,\tau}(\cdot \mid s, a)=0$ for all pairs $(s,a)$ that have some probability of leading to these states. As shown in Figures \ref{fig:one-step} and \ref{fig:two-step}, this last pruning step removes states $s_4$ and $s_6$ for $k=1$, and states $s_1$ and $s_4$ for $k=2$, as these states are not reachable.

Given the influence-constrained counterfactual MDP, we apply a value iteration algorithm (similar to the method in \citep{tsirtsis2021counterfactual}) to find the optimal $(k, m)$-CF policy, with the only change being that the action choices are restricted to only those in the influence-constrained counterfactual MDP. The worst-case complexity of this dynamic programming algorithm for given values of $k$ and $m$ is $O(|\tau| \cdot |S|^2 \cdot |A| \cdot m)$, when all state-action pairs from the MDP $\mathcal{M}$ are contained in the influence-constrained counterfactual MDP. However, in practice, the size of the influence-constrained counterfactual MDP will often be significantly smaller than the unconstrained counterfactual MDP, reducing the complexity and execution time of the value iteration algorithm. The sizes of the counterfactual MDPs in our experiments, pruned with each value of $k$, are provided in Table \ref{tab:state_space_after_pruning}.

\jl{Our approach is always guaranteed to identify the optimal $(k,m)$-CF policy, as follows:
\begin{theorem}[Optimal $(k,m)$-CF Policy Guarantee]
    For any given MDP $\mathcal{M}$, observed path $\tau$, and values of $k$ and $m$, our method is guaranteed to identify the optimal $(k,m)$-CF policy.
\end{theorem}
\begin{proof}
\mk{By construction, }the pruning step of our algorithm ensures that only the parts of the counterfactual MDP that satisfy $k$-step influence remain. We then apply the value iteration algorithm from \citep{tsirtsis2021counterfactual} to the influence-constrained counterfactual MDP to determine the optimal $m$-CF policy. This is a standard value iteration algorithm based on dynamic programming, which is guaranteed to find the optimal policy \citep{sutton2018reinforcement}. Therefore, because the optimal $m$-CF policy is derived over only the transitions that satisfy the $k$-step influence constraint, our method is guaranteed to identify the optimal $(k,m)$-CF policy.
\end{proof}

}
\section{Experiments}
\label{sec:experiments}
In this section, we apply our notion of influence to several MDPs to derive counterfactual policies with varying levels of influence with respect to the observed path. We evaluate our approach on a Grid World model, an epidemic MDP model, and an MDP modelling sepsis patient trajectories\footnote{The source code for our experiments can be found \jl{at \url{https://github.com/ddv-lab/counterfactual-influence-in-MDPs}}.}.

\subsection{Setup}
For each MDP, we first generate an observed path of length $T$ using a deliberately suboptimal policy. The choice of $T$ depends on the MDP being evaluated to ensure there is sufficient opportunity to improve the policy, and is noted explicitly in the following subsections. We use this observed path to generate the optimal $(k, m)$-CF policy for $1 \leq k \leq T+1$ and $1 \leq m \leq T$, using the algorithm described in Section \ref{sec:methodology}. We assume that counterfactual paths begin in the same initial state as the observed path. To evaluate the performance of the $(k, m)$-CF policies, we consider the value of the initial state, $V(s_0)$, as this measures how good the paths generated by the optimal policy (starting at the initial state) will be. As a baseline, we consider the $m$-CF policies produced by the algorithm in \citep{tsirtsis2021counterfactual}, as this is the only other existing method for counterfactual inference in MDPs - this baseline is reported in our results as $k=T+1$.
\subsection{Grid World}
\begin{figure}
    \centering
    \begin{tikzpicture}[scale=0.7, transform shape]
{\Large
\begin{axis}[
    xlabel={Maximum Number of Actions Changed},
    ylabel={V($s_0$)},
    xmin=0, xmax=12.5,
    ymin=-1000, ymax=600,
    xtick={0,2,4,6,8,10,12},
    ytick={-1000, -800,-600,-400,-200,0, 200, 400, 600},
    legend style={at={(0.9,0.5)}, anchor=east}, %
    ymajorgrids=true,
    grid style=dashed,
    width=10cm,
]

\addplot[
    color=pink,
    mark=*,
    only marks,
    mark options={fill=pink, draw=pink, opacity=1.0},
    mark size=3pt
]
coordinates {
    (1,-897) (2,-897) (3,-897) (4,-897) (5,-897) (6,-897) (7,-897) (8,-897) (9,-897) (10,-897) (11,-897) (12,-897)
};

\addplot[
    color=blue,
    mark=x,
    only marks,
    mark size = 3pt
]
coordinates {
    (1,-897) (2,-897) (3,-897) (4,-897) (5,-897) (6,-897) (7,-897) (8,-897) (9,-897) (10,-897) (11,-897) (12,-897)
};

\addplot[
    color=green,
    mark=diamond*,
    only marks,
    mark size=3pt
]
coordinates {
    (1,-12.493350733858055) (2,4.194131911297905) (3,37.27091871435141) (4,361.23427100610456) (5,421.180039320056) (6,477.08103652315737) (7,519.5622206868214) (8,534.4494547495025) (9,540.5475952932438) (10,541.6044338969365) (11,541.6050142756363) (12,541.6050142756363)
};

\legend{Observed Path,K=1 to K=6,K=7 to K=T+1}

\end{axis}}
\end{tikzpicture}
    \caption{Grid World: value of initial state given $k$-step influence and maximum $m$ actions changed}
    \label{fig:gridworld-value}
\end{figure}
In the Grid World experiment, the agent must traverse a 4x4 grid from the top-left corner to the bottom-right corner, avoiding a dangerous terminal state in the middle of the grid. At each time step, the agent can choose to move up, down, left, or right. However, there is a small probability that the agent will move in a different direction to the chosen action. As the agent gets closer to the goal state in the bottom-right corner, the reward for each state increases, and the agent receives a reward of $100$ for reaching the terminal goal state. However, there is also a reward of $-100$ for transitioning into the dangerous terminal state.

We derive the counterfactual MDP using an observed path of length $11$ that falls into this dangerous terminal state at $t=3$. Figure \ref{fig:gridworld-value} shows how $k$ and $m$ affect the value collected by the $(k,m)$-CF policy. When $k \leq 6$, we see no improvement in the policy, because all influenced counterfactual paths lead to the dangerous terminal state. However, when $k \geq 7$, the optimal influenced counterfactual path avoids the dangerous state for all $m$ (shown by $V(s_0) \geq 0$), and when $m \geq 4$ it reaches the goal state and gains the $+100$ reward, resulting in much higher values for the initial state. This shows that we don't need to sacrifice optimality to generate counterfactual paths which are still influenced by the observed path, because the optimal counterfactual paths for $k=7$ and $k=T+1$ both reach the goal state, but the counterfactual path for $k=7$ is more informed by the observation than the path for $k = T+1$ (i.e., obtained without influence constraints).

\subsection{Epidemic Model}
The epidemic MDP simulates how infection spreads through a (discrete) population. Each state $(S, I, V)$ consists of $S$ susceptible and $I$ infected individuals, and the number of available vaccines $V$. At each time step, the agent implements a vaccination strategy and can choose among three actions: do nothing, vaccinate an infected individual, or vaccinate a susceptible individual. The reward for each transition $(s, a, s')$ is given by the negative value of the number of infected individuals in $s$, $-I$. Full details of the model are given in Appendix \ref{app:epidemic_mdp}.

The observed path of length $7$ begins in state $(S=9, I=1, V=20)$ and is generated from a suboptimal policy that chooses to ``do nothing'' in every state. Therefore, we can generate increasingly better counterfactual paths that are still highly influenced by the observed path, because switching the action in most states would lead to an improved outcome. This is shown in Figure \ref{fig:epidemic-value}, where for a $k$-step influence with $k > 2$, the policy value increases monotonically with $k$ and $m$. Figure \ref{fig:epidemic-infection-rates} shows the average number of infected individuals $I$ at each time step $t$ for selected combinations of $(k, m)$, compared to the observed path, for $1000$ simulated trajectories. This further illustrates how relaxing the influence constraint impacts the final reward. 
With $k=3$, we obtain a counterfactual path almost identical to the observed one except for the last two steps. On the other hand, when no influence constraints are imposed ($k=T+1$), we obtain the optimal counterfactual path where the first action is changed to vaccinate the single infected person and stop the epidemic.
Note that when the counterfactual paths follow the observed path (i.e., the counterfactual state coincides with the observed state, and the observed action is taken), the transitions become deterministic (leading to the next observed state with probability$=1$), as we can see in the plot for some path prefixes. 

\begin{figure}[tbh]
    \centering
    \subfigure[Epidemic MDP: initial state value given changing influence]{%
    \resizebox{0.45\linewidth}{!}{
        \begin{tikzpicture}[scale=1.0, baseline=(current bounding box.south)]
{\Large
\begin{axis}[
    xlabel={Maximum Number of Actions Changed},
    ylabel={V($s_0$)},
    xmin=0.5, xmax=7.5,
    ymin=-40, ymax=0,
    xtick={1,2,3,4,5,6,7},
    ytick={-40,-35,-30,-25,-20,-15,-10, -5, 0},
    legend style={
        at={(0.5,1.05)},  %
        anchor=south,      %
        legend columns=4, %
        /tikz/every even column/.append style={column sep=0.2cm}
    },
    ymajorgrids=true,
    grid style=dashed,
    cycle list name=color list,
    every axis plot/.append style={thick},
    width=10cm,
]

\addplot+[
    color=pink,
    mark=*,
    only marks,
    mark options={fill=pink, opacity=1.0, mark size=3pt}
]
coordinates {
    (1,-38.0) (2,-38.0) (3,-38.0) (4,-38.0) (5,-38.0) (6,-38.0) (7,-38.0)
};
\addlegendentry{Observed Path}

\addplot+[
    color=blue,
    mark=x,
    only marks,
    mark size=3pt
    ]
coordinates {
    (1,-38.0) (2,-38.0) (3,-38.0) (4,-38.0) (5,-38.0) (6,-38.0) (7,-38.0)
};
\addlegendentry{K=1 to K=2}

\addplot+[
    color=orange,
    mark=square*,
    only marks,
    ]
coordinates {
    (1,-37.109) (2,-37.109) (3,-37.109) (4,-37.109) (5,-37.109) (6,-37.109) (7,-37.109)
};
\addlegendentry{K=2}

\addplot+[
    color=yellow,
    mark=triangle*,
    only marks,
    ]
coordinates {
    (1,-35.154887) (2,-34.203283) (3,-34.203283) (4,-34.203283) (5,-34.203283) (6,-34.203283) (7,-34.203283)
};
\addlegendentry{K=3}

\addplot+[
    color=green,
    mark=diamond*,
    only marks,
    ]
coordinates {
    (1,-33.906211194) (2,-32.090256487999994) (3,-31.130102333) (4,-31.130102333) (5,-31.130102333) (6,-31.130102333) (7,-31.130102333)
};
\addlegendentry{K=4}

\addplot+[
    color=red,
    mark=pentagon*,
    only marks,
    ]
coordinates {
    (1,-29.646347838121) (2,-26.16431996308) (3,-23.990685078201) (4,-23.026639107063) (5,-23.026639107063) (6,-23.026639107063) (7,-23.026639107063)
};
\addlegendentry{K=5}

\addplot+[
    color=purple,
    mark=*,
    only marks,
    ]
coordinates {
    (1,-27.907975807272233) (2,-19.953992664651857) (3,-14.406184200251841) (4,-11.4607933279152) (5,-10.386358523854401) (6,-10.386358523854401) (7,-10.386358523854401)
};
\addlegendentry{K=7}

\addplot+[
    color=black,
    mark=otimes,
    only marks,
    ]
coordinates {
    (1,-1.0) (2,-1.0) (3,-1.0) (4,-1.0) (5,-1.0) (6,-1.0) (7,-1.0)
};
\addlegendentry{K=T+1}

\end{axis}}
\end{tikzpicture}
        \label{fig:epidemic-value}
        }
    }
    \hspace{0.5cm} %
    \subfigure[Epidemic MDP: mean infection rate over time for select $(k, m)$. The error bars show $\sigma(I)$.]{%
    \raisebox{0.12cm}{\resizebox{0.45\linewidth}{!}{
        \begin{tikzpicture}[scale=1.0, baseline=(current bounding box.south)]
{\Large
    \begin{axis}[
        xlabel={$t$},
        ylabel={Mean $I$},
        xmin=0, xmax=7,
        ymin=0, ymax=10,
        xtick={0,1,2,3,4,5,6,7},
        ytick={0,2,4,6,8,10},
        width=10cm,
        grid=major,
        legend style={
        at={(0.5,1.15)},  %
        anchor=south,      %
        legend columns=5, %
        /tikz/every even column/.append style={column sep=0.05cm},
    }
    ]
    
    \addplot[mark=*, black, error bars/y dir=both, error bars/y explicit] coordinates {
        (0,1) +- (0,0)
        (1,2) +- (0,0)
        (2,3) +- (0,0)
        (3,6) +- (0,0)
        (4,8) +- (0,0)
        (5,9) +- (0,0)
        (6,9) +- (0,0)
        (7,9) +- (0,0)
    };
    \addlegendentry{Observed Path}

    \addplot[mark=*, blue, error bars/y dir=both, error bars/y explicit] coordinates {
        (0,1) +- (0,0)
        (1,2) +- (0,0)
        (2,3) +- (0,0)
        (3,6) +- (0,0)
        (4,8) +- (0,0)
        (5,9) +- (0,0)
        (6, 8.118) +- (0,0.32260812)
        (7, 8.208)+- (0,0.40587683)
    };
    \addlegendentry{$(3, 1)$}

    \addplot[mark=*, red, error bars/y dir=both, error bars/y explicit] coordinates {
        (0,1) +- (0,0)
        (1,2) +- (0,0)
        (2,3) +- (0,0)
        (3,3.548) +- (0,0.55289782)
        (4,4.191) +- (0,0.9739194)
        (5,4.618) +- (0,1.113587)
        (6,4.482) +- (0, 0.98978584)
        (7,4.922)+- (0,0.84847864)};
    \addlegendentry{$(6, 4)$}

    \addplot[mark=*, green, error bars/y dir=both, error bars/y explicit] coordinates {
        (0,1) +- (0,0)
        (1,2) +- (0,0)
        (2,1.91) +- (0,0.28618176)
        (3,1.704) +- (0,0.62320462)
        (4,1.486) +- (0,0.81351337)
        (5,1.269) +- (0,0.90146492)
        (6,2.163) +- (0,1.58506498)
        (7,2.955)+- (0,2.09689652)};
    \addlegendentry{$(7, 4)$}

    \addplot[mark=*, orange, error bars/y dir=both, error bars/y explicit] coordinates {
        (0,1) +- (0,0)
        (1,0) +- (0,0)
        (2,0) +- (0,0)
        (3,0) +- (0,0)
        (4,0) +- (0,0)
        (5,0) +- (0,0)
        (6,0) +- (0,0)
        (7,0) +- (0,0)
    };
    \addlegendentry{$(8, 1)$}
    
    \end{axis}}
\end{tikzpicture}
        \label{fig:epidemic-infection-rates}
        }
    }}
    \caption{Epidemic MDP analysis results}
\end{figure}

\subsection{Sepsis Model\label{subsec:sepsis}}
The Sepsis MDP is taken from \citep{oberst2019counterfactual}\footnote{Licensed under the MIT License, and available at \url{https://github.com/clinicalml/gumbel-max-scm}.} and models the trajectories of sepsis patients. Each state consists of four vital signs (heart rate, blood pressure, oxygen concentration, and glucose levels), with possible values of low, normal, or high. There are three treatment options, which can be turned on or off at each time step (8 actions in total). Unlike in \citep{oberst2019counterfactual}, we scale the rewards depending on the number of vital signs that are out of range, between $-1000$ (patient dies) and $1000$ (patient is discharged). For further details, we refer to \citep{oberst2019counterfactual}. In our experiments, we simulated the trajectory of a sepsis patient over $10$ time steps.

\begin{figure}[tbh]
    \centering
    \subfigure[Sepsis: $V(s_0)$ given $k$-step influence and maximum $m$ actions changed for a catastrophic path.]{%
       \resizebox{0.45\linewidth}{!}{%
            \begin{tikzpicture}[scale=1.2, baseline=(current bounding box.south)]
\begin{axis}[
    xlabel={Maximum Number of Actions Changed},
    ylabel={V($s_0$)},
    xmin=0.5, xmax=10.5,
    ymin=-7500, ymax=-2500,
    xtick={1,2,3,4,5,6,7, 8, 9, 10},
    ytick={-7000, -6000, -5000, -4000, -3000},
    legend style={
        at={(0.5,1.02)},  %
        anchor=south,      %
        legend columns=3, %
        /tikz/every even column/.append style={column sep=0.05cm}
    },
    ymajorgrids=true,
    grid style=dashed,
    cycle list name=color list,
    every axis plot/.append style={thick},
    width=10cm,
]
\addplot+[
    color=pink,
    mark=*,
    only marks,
    mark options={fill=pink, opacity=1.0, mark size=3pt}
]
coordinates {
    (1,-6950.0) (2,-6950.0) (3,-6950.0) (4,-6950.0) (5,-6950.0) (6,-6950.0) (7,-6950.0) (8,-6950.0) (9, -6950.0) (10, -6950.0)
};
\addlegendentry{Observed Path}

\addplot+[
    color=blue,
    mark=x,
    only marks,
    mark size=3pt
    ]
coordinates {
     (1,-6950.0) (2,-6950.0) (3,-6950.0) (4,-6950.0) (5,-6950.0) (6,-6950.0) (7,-6950.0) (8,-6950.0) (9, -6950.0) (10, -6950.0)
};
\addlegendentry{K=1 to K=8}

\addplot+[
    color=red,
    mark=square*,
    only marks,
    ]
coordinates {
    (1,-4819.926735468716) (2,-4188.573056914942) (3,-3818.0503867238476) (4,-3528.7296775029913) (5,-3280.1512891528228) (6,-3100.7475158150282) (7, -3014.751869563713) (8,-3014.7504863856334) (9,-3014.7504863856334) (10,-3014.7504863856334)
};
\addlegendentry{K=9 to K=T+1}

\end{axis}
\end{tikzpicture}
        }
        \label{fig:sepsis-value-catastrophic}
    }
    \hspace{1cm} %
    \subfigure[Sepsis: $V(s_0)$ given $k$-step influence and maximum $m$ actions changed for a suboptimal path.]{%
    \raisebox{0.015cm}{\resizebox{0.45\textwidth}{!}{\begin{tikzpicture}[scale=1.0, baseline=(current bounding box.south)]
{\Large
\begin{axis}[
    xlabel={Maximum Number of Actions Changed},
    ylabel={V($s_0$)},
    xmin=0.5, xmax=10.5,
    ymin=0, ymax=1.5,
    xtick={1,2,3,4,5,6,7, 8, 9, 10},
    ytick={0.0, 0.2, 0.4, 0.6, 0.8, 1.0, 1.2, 1.4},
    legend style={
        at={(0.5,1.10)},  %
        anchor=south,      %
        legend columns=3, %
        /tikz/every even column/.append style={column sep=0.05cm}
    },
    ymajorgrids=true,
    grid style=dashed,
    cycle list name=color list,
    every axis plot/.append style={thick},
    height=7cm,
    width=14cm,
]

\addplot+[
    color=pink,
    mark=*,
    only marks,
    mark options={fill=pink, opacity=1.0, mark size=4pt}
]
coordinates {
    (1,0.0) (2,0.0) (3,0.0) (4,0.0) (5,0.0) (6,0.0) (7,0.0) (8,0.0) (9, 0.0) (10, 0.0)
};
\addlegendentry{Observed Path}

\addplot+[
    color=blue,
    mark=x,
    only marks,
    mark size=4pt
    ]
coordinates {
     (1,0.0) (2,0.0) (3,0.0) (4,0.0) (5,0.0) (6,0.0) (7,0.0) (8,0.0) (9, 0.0) (10, 0.0)
};
\addlegendentry{K=1 to K=2}

\addplot+[
    color=red,
    mark=square*,
    only marks,
    mark size=2.5pt
    ]
coordinates {
    (1,1.4) (2,1.4) (3,1.4) (4,1.4) (5,1.4) (6,1.4) (7,1.4) (8,1.4) (9,1.4) (10,1.4)
};
\addlegendentry{K=3 to K=T+1}

\end{axis}}
\end{tikzpicture}}
        \label{fig:sepsis-value-suboptimal}
    }
    }
    \caption{Sepsis MDP analysis results}
\end{figure}

In the model, the patient dies if three or more vital signs go out of range. This means that Sepsis trajectories depend heavily on the first few actions to reduce the probability that treatment will lead to one of these terminal states. However, if the observed path is a catastrophic one, as in Figure \ref{fig:sepsis-value-catastrophic}, changing these actions early on will often lead far away from the observed path, and so we need a low influence (i.e., high values of $k$) to obtain reasonably better outcomes. As shown in Figure \ref{fig:sepsis-value-suboptimal}, for suboptimal paths (where the patient is not dead/discharged at the end of the path, but is not discharged), a low value of $k$ ($k \geq 3$) is sufficient to recover the optimal counterfactual path. However, only a small improvement on the observed path is possible.

\subsection{Reduction in MDP Size}
\begin{table}[h]
\centering
\resizebox{0.55\textwidth}{!}{%
{\Large
    \begin{tabular}{|c|c|c|c|c|c|c|c|c|c|c|c|c|c|c|}
        \hline
        $k$ & $1$ & $3$ & $6$ & $7$ & $10$ & $T+1$ & $|S|$ \\
        \hline
        Grid World & 12 & 15 & 15 & 147 & 182 & 192 & 16 \\
        \hline
        Epidemic & 8 & 43 & 157 & 210 & - & 19355 & 2541 \\
        \hline
        Sepsis & 11 & 14 & 3477 & 4055 & 5884 & 6996 & 1440 \\
        \hline
    \end{tabular}}
    }
    \caption{Size of the state space of pruned counterfactual MDPs, given $k$-step influence. $|S|$ is the state space size of the original MDP. There is no data for the Epidemic environment for $k=10$ because the observed path has length $7$.}
    \label{tab:state_space_after_pruning}

\end{table}

An additional benefit of our $k$-step influence approach is that the state space of the pruned counterfactual MDP can be significantly reduced. State space sizes for our models (for selected values of $k$) are given in Table \ref{tab:state_space_after_pruning}. For $k=1$, the MDP is restricted to just the states in the corresponding observed paths. 
We can also see that the state space of the pruned counterfactual MDPs for $k < T+1$ are significantly smaller (for the Epidemic and Sepsis models) than the state space of the entire counterfactual MDP ($k=T+1$), meaning that value iteration is much more efficient. Full results for all the environments are given in Appendix~\ref{app:state spaces}, and execution times are discussed in Appendix~\ref{app:execution times}.

\section{Conclusion}
In this work, we addressed a significant yet neglected issue in counterfactual inference for Markov Decision Processes (MDPs): as counterfactual states and actions progressively diverge from the observed ones over time, the observation may no longer influence the counterfactual world, and the resulting explanation will no longer be tailored to the individual observation. To tackle this issue, we introduced a formal methodology to quantify the influence of the observed path $\tau$ on a counterfactual path $\tau'$, and devised an algorithm to generate optimal counterfactual explanations and policies while satisfying predetermined influence constraints. Our experiments reveal that while there exists a trade-off between influence and policy optimality, it is often possible to derive policies that are nearly optimal while still being significantly influenced by the initial observed path. The optimal degree of influence is domain-specific, but our method allows us to evaluate the trade-off between influence and optimality and make a better-informed choice on the value of the influence bound $k$. \jl{We include a detailed discussion on the choice of $k $ in Appendix \ref{app: choice of k}.}

Although this method is the first to expose and solve the problem of counterfactual influence, it relies on the availability of the system's transition probabilities. Moving forward, our goal is to develop optimal policies through a model-free approach, particularly in scenarios where the transition probabilities of the underlying MDP are unknown or uncertain.

\acks{This work was supported by UK Research and Innovation [grant number EP/W014785/2]; and UK Research and Innovation [grant number EP/S023356/1] in the UKRI Centre for Doctoral Training in Safe and Trusted Artificial Intelligence (www.safeandtrustedai.org).}

\bibliography{refs}
\newpage
\pagebreak
\appendix
\section{Simulation Evidence for Motivating Example}
\label{app: simulation evidence}
\jl{
In Figure \ref{fig:sepsis-mdp-with-trajectories}, we showed that if we allow counterfactual paths to be unconstrained, these paths may diverge from the observation such that their counterfactual transition probabilities are uninformed by the observation. Consequently, these counterfactual paths will no longer be tailored to the specific observation. Deriving optimal counterfactual policies over areas of the counterfactual MDP that are not influenced by the observation could yield policies that are not tailored to the given observation, and may actually be suboptimal for the particular observation.

This is particularly an issue when hidden subgroup differences exist within the population. In learned MDPs, some aspects of the true, underlying state could remain unobserved, which can lead to differences in the transitions taken by different subgroups. Therefore, in these environments, it is crucial that counterfactual policies are tailored to the observation, or they may be optimal for the population as a whole, but suboptimal for the particular subgroup that the observation belongs to.

We can evaluate this potential suboptimality by considering an example MDP where we have access to the transition probability and reward functions for both a fully observable and partially observable version of the MDP. This partially observable version of the MDP represents an MDP that we may be able to learn in practice (where, for example, we can only observe an incomplete set of variables in the learned MDP). Given an observed trajectory, we can learn:

\begin{enumerate}
        \item The optimal counterfactual policy across the general population (i.e., the unconstrained counterfactual policy), using the partially observable MDP.
        \item Various $k$-CF policies, again using the counterfactual partially observable MDP.
        \item The ``true" optimal counterfactual policy for the diabetic patient, using the fully observable MDP.
\end{enumerate}

We can then compare these policies by measuring the average cumulative reward they achieve over the ``true" counterfactual MDP (i.e., the fully observable counterfactual MDP). We expect the $k$-CF policies (2) to approximate the subgroup-specific policy (3) more closely than the general-population policy (1) (and therefore achieve higher average rewards than the general-population policy) because these policies will be restricted to areas of the counterfactual MDP that are more informed/influenced by the observation. 

This effect is particularly noticeable where the observation is optimal or close to optimal. This is because these observed paths typically require very few (or no) changes to improve upon the observation, hence the optimal counterfactual paths will be close to the observation in the counterfactual MDP. When we generate counterfactual policies with a small value of $k$, this will restrict the counterfactual MDP to those areas that are highly tailored to the observation, and will include these higher reward counterfactual paths.

As an example, we take the sepsis example from Figure \ref{fig:sepsis-mdp-with-trajectories}. In the Sepsis MDP \citep{oberst2019counterfactual}, the diabetic status of the patient can be explicit or hidden in the state. Given an observed trajectory of a diabetic patient, if we derive the optimal counterfactual policy for this observation over the partially observable MDP, without constraining the policy to areas of the counterfactual MDP that are sufficiently influenced by the observation, this may lead to policies that are optimal for the general population and not optimal for the observed diabetic patient. 

\begin{figure}
    \centering
    \begin{tikzpicture}
    \begin{axis}[
        xlabel={$k$},
        xmin=1,
        xmax=11,
        ylabel={Average Cumulative Reward},
        legend style={at={(0.5,1.2)}, anchor=south, cells={align=center}}, %
        grid=major
    ]

    \addplot[color=blue, mark=o, thick] coordinates {
        (1, 0.0)
        (2, 0.0)
        (3, -11.95)
        (4, -13.56)
        (5, -15.695)
        (6, -14.605)
        (7, -17.38)
        (8, -10.99)
        (9, -15.535)
        (10, -17.225)
        (11, -13.33)
    };
    \addlegendentry{Average Reward Achieved by $k$-CF Policies}

    \addplot[color=red, dashed, thick] coordinates {
        (1, 0.0)
        (11, 0.0)
    };
    \addlegendentry{Average Reward Achieved by Optimal CF Policy Across Fully-Known MDP}

    \addplot[color=darkgreen, dash dot, thick] coordinates {
        (1, -13.33)
        (11, -13.33)
    };
    \addlegendentry{Average Reward Achieved by Optimal CF Policy Across Hidden MDP}
    \end{axis}
\end{tikzpicture}
    \caption{Average cumulative reward of policies, given an observed diabetic path under the optimal policy.}
    \label{fig:sepsis-motivation-optimal}
\end{figure}

Figures \ref{fig:sepsis-motivation-optimal} and \ref{fig:sepsis-motivation-suboptimal} present the average cumulative reward obtained by these policies on two observed diabetic trajectories. These trajectories are of length $T=10$, hence $k=11$ corresponds to the entire counterfactual MDP (see Section \ref{sec: theory}). Figure \ref{fig:sepsis-motivation-optimal} compares the policies on an observed diabetic path under the optimal policy. As expected, the optimal counterfactual policy over the fully observable MDP is the same as the observed policy, as are the $k=1$ and $k=2$ policies, which are derived from areas of the counterfactual MDP that are greatly influenced by the observation. However, the average cumulative reward achieved by the optimal counterfactual policy across the partially observable MDP is much lower, as this is optimal across the general population rather than for diabetic patients. We also see a decline in performance for $k$-CF policies where $k\geq3$, as these policies are derived over areas of the counterfactual MDP that are less influenced by (and therefore less tailored to) the observation.

\begin{figure}
    \centering
    \begin{tikzpicture}
    \begin{axis}[
        xlabel={$k$ values},
        ylabel={Reward},
        xmin=1, xmax=11, %
        legend style={at={(0.5,1.2)}, anchor=south, cells={align=center}}, %
        grid=major
    ]

    \addplot[color=blue, mark=o, thick] coordinates {
        (1, -1350.0)
        (2, -400.0)
        (3, -400.0)
        (4, -400.0)
        (5, -400.0)
        (6, -400.0)
        (7, -400.0)
        (8, -400.0)
        (9, -400.0)
        (10, -513.07)
        (11, -518.313)
    };
    \addlegendentry{Average Reward Achieved by $k$-CF Policies}

    \addplot[color=red, dashed, thick] coordinates {
        (1, -400.0)
        (11, -400.0)
    };
    \addlegendentry{Average Reward Achieved by Optimal CF Policy Across Fully-Known MDP}

    \addplot[color=darkgreen, dash dot, thick] coordinates {
        (1, -518.313)
        (11, -518.313)
    };
    \addlegendentry{Average Reward Achieved by Optimal CF Policy Across Hidden MDP}

    \end{axis}
\end{tikzpicture}
    \caption{Average cumulative reward of policies, given an observed diabetic path under a suboptimal policy.}
    \label{fig:sepsis-motivation-suboptimal}
\end{figure}

Figure \ref{fig:sepsis-motivation-suboptimal} compares the policies on an observed diabetic path under a suboptimal policy (the optimal policy with some randomly chosen actions). As expected, we see that the average cumulative reward achieved by the optimal counterfactual policy over the fully observable MDP is again higher than that of the partially observable MDP. We also see that there is a decline in performance for counterfactual policies with higher values of $k$ (in this case $k\geq10$), as these are learnt over areas of the counterfactual MDP that are not very influenced by the observation, and therefore are closer to the general population counterfactual policy rather than the ``true" diabetic counterfactual policy.
}

\section{Related Work}
\label{app:related work}
To the best of our knowledge, there is no other work that directly aims to address the problem of counterfactual influence caused by the divergence of the counterfactual path from the observed one. Nevertheless, there is growing field of work focusing on the intersection between causality and various other domains, including reinforcement learning and planning. 

\paragraph{Causal RL}
Causality can often improve the performance of RL algorithms~\citep{zeng2023survey, scholkopf2021toward}, especially when data is scarce, or where exploration may be dangerous or infeasible \citep{lu2020sample}. In such cases, counterfactual reasoning can be used to augment datasets with counterfactual data, improving the efficiency and performance of RL algorithms \citep{lu2020sample, buesing2018woulda}, to generate counterfactual paths as a causal explanation for how an observed policy could be improved \citep{oberst2019counterfactual, tsirtsis2021counterfactual, NEURIPS2023_09ae6bea, gajcin2024acter}, or to measure the influence of an individual agent's action/treatment decision on the outcome in a multi-agent setting \citep{triantafyllou2024agentspecific} (a different notion of influence to the notion of counterfactual influence in this paper).

Causal reasoning is also useful at training-time: if an agent can perform informative interventions to learn the causal structure of its environment, it would enable performing more structured exploration when learning optimal policies \citep{dasgupta2019causal}. In addition to MDPs, causal RL has also been successfully applied to 
Multi-Arm Bandit problems \citep{lattimore2019causalbandits, madhavan2021intervention} and Dynamic Treatment Regimes \citep{zhang2019dtr}.

\paragraph{Planning} 
Our work is related to the field of explainable planning~\citep{fox2017explainable} and in particular contrastive explanations~\citep{borgo2018towards}, which focuses on offering explanations about alternative sequence of actions in classic planning scenarios.  \citet{krarup2021contrastive} propose a method to restrict the model by implementing constraints based on user questions, thereby providing structured explanations for the planning procedure as a negotiation.  \citet{stein2021generating} extends this to explanations for plans in partially-revealed environments. However, it should be noted that these counterfactual explanations do not adjust the transition probabilities based on the observed path in the counterfactual world.

\paragraph{Offline RL}
In offline RL, the objective is to find an optimal policy maximising the expected return using a fixed dataset of observed trajectories \citep{uehara2022review}. However, this can be challenging due to the problem of \textit{distribution shift}, where there is a mismatch between the distribution of trajectories in the dataset, and distribution of the trajectories that would be generated by the learned policy \citep{pmlr-v139-jin21e}. This often leads to overestimation of the value function for out-of-distribution actions \citep{uehara2022review}. Recent work tackles distribution shift by promoting proximity between the learned policy and the behaviour policy. This is achieved through regularising the learned policy to avoid states and actions that appear less frequently \citep{pmlr-v97-fujimoto19a}, or using pessimistic value-based approaches, which apply a penalty to the value function on these states and actions \citep{yu2020mopo, kidambi2020}. Although our work solves a different problem, our notion of influence is similar to these methods as it can be seen as form of regularisation constraint.

\section{Discussion on Notion of Influence}
\label{app: probabilistic influence}
In our work, we formulated our notion of influence from a structural perspective, in terms of the supports of the state-action pairs. By Proposition \ref{prop: no influence disjoint supports}, we know this is an efficient and precise condition for influence. However, one could argue that this notion is too restrictive as this exerts hard constraints for pruning. Alternatively, we could have formulated $1$-step influence from a probabilistic perspective. But, any notion would have to be in terms of the interventional probability distributions alone: any notion using the counterfactual probabilities (e.g., the statistical distance between the nominal and counterfactual distributions) would not be exact, due to sampling variability in the counterfactual probabilities from the sampled Gumbel values. For example, we could measure $1$-step influence as:

\[
\dfrac{\sum_{s' \in \mathcal{S}}|P(s' \mid s_t, a_t) - P(s' \mid s, a)|}{2} = \begin{cases}
    1 \text{ if the distributions have disjoint support}\\
    <1 \text{ if the distributions have overlapping support}
\end{cases}
\]

and specify some $\epsilon$ as the maximum of this sum. Similar to our notion of $k$-step influence, we may want to extend this to paths, e.g., ensuring the total statistical distance is less than some value. We could also consider the reward: how should we find an appropriate $\epsilon$ that balances a path's total statistical distance with the total reward of that path, e.g., how should we choose parameters $\alpha$ and $\beta$ below:

\[
\alpha\dfrac{\sum_{s' \in \mathcal{S}}|P(s' \mid s_t, a_t) - P(s' \mid s, a)|}{2} - \beta \sum_{s' \in \mathcal{S}}r(s, a, s') - r(s_t, a_t, s')
\]

This is a design choice that depends on how safety-critical the domain is: ensuring the counterfactual paths are sufficiently informed vs. optimising the total reward. We chose to use our structural notion in this paper, as it is a simple and exact notion to define influence. But, in future work, it would be interesting to compare these two notions of influence, to see in what situations these two notions differ.

One simple MDP example that illustrates the differences between these notions is given in Figure \ref{fig: prob influence example}. The observed path is given in red, and the transitions that would be contained in the influence-constrained counterfactual MDP (under our structural notion of influence for $k=2$) are represented by the solid arrows. The influence-constrained counterfactual MDP is quite restrictive, largely due to the transition from $s_3 \rightarrow s_7$ which deviates far from the observed path. Under a probabilistic notion of influence, we might instead choose to ignore that this transition deviates far from the observed path because it has such low probability ($p=0.01$), and consider all of the counterfactual MDP. However, if the nominal probability of the transition from $s_3 \rightarrow s_7$ was much higher (e.g., $p>0.1$), we may want the probabilistic notion of influence to remove this path, unless, for example, the reward for reaching state $s_{11}$ was very high. The trade-off between influence and reward is a design choice and is domain-dependent, so any notion of influence should consider this: our structural notion of influence has the influence bound $k$ that can be changed to loosen the restriction on influence, and achieve higher rewards.

\begin{figure}[h]
\centering
\resizebox{0.65\textwidth}{!}{
\begin{circuitikz}
    \tikzstyle{every node}=[font=\LARGE]
    \draw [ color={rgb,255:red,255; green,0; blue,0} ] (7.5,20) circle (1.25cm) node {\LARGE $s_0$} ;
    \draw  (13.75,20) circle (1.25cm) node {\LARGE $s_1$} ;
    \draw  (10.75,16.5) circle (1.25cm) node {\LARGE $s_3$} ;
    \draw [ color={rgb,255:red,255; green,0; blue,0} ] (4.25,16.5) circle (1.25cm) node {\LARGE $s_2$} ;
    \draw  (16.75,16.5) circle (1.25cm) node {\LARGE $s_4$} ;
    \draw  (10.75,12.75) circle (1.25cm) node {\LARGE $s_6$} ;
    \draw [ color={rgb,255:red,255; green,0; blue,0} ] (4.25,12.75) circle (1.25cm) node {\LARGE $s_5$} ;
    \draw  (16.75,12.75) circle (1.25cm) node {\LARGE $s_7$} ;
    \draw [ color={rgb,255:red,255; green,0; blue,0} ] (7.75,9) circle (1.25cm) node {\LARGE $s_9$} ;
    \draw  (1.25,9) circle (1.25cm) node {\LARGE $s_8$} ;
    \draw  (13.75,9) circle (1.25cm) node {\LARGE $s_{10}$} ;
    \draw  (20,9) circle (1.25cm) node {\LARGE $s_{11}$} ;
    \draw [ color={rgb,255:red,255; green,0; blue,0}, ->, >=Stealth] (7.5,18.75) -- (5.25,17.25);
    \draw [ color={rgb,255:red,255; green,0; blue,0}, ->, >=Stealth] (4.25,15.25) -- (4.25,14);
    \draw [->, >=Stealth] (4.25,11.5) -- (2.25,9.75);
    \draw [ color={rgb,255:red,255; green,0; blue,0}, ->, >=Stealth] (4.25,11.5) -- (6.5,9.75);
    \draw [->, >=Stealth, dashed] (10.75,15.25) -- (5.5,13.5)node[pos=0.5, fill=white]{0.9};
    \draw [->, >=Stealth, dashed] (10.75,15.25) -- (15.5,13.75)node[pos=0.5, fill=white]{0.01};
    \draw [->, >=Stealth, dashed] (13.75,18.75) -- (11.75,17.25);
    \draw [->, >=Stealth, dashed] (13.75,18.75) -- (15.75,17.25);
    \draw [->, >=Stealth, dashed] (16.75,15.25) -- (16.75,14);
    \draw [->, >=Stealth, dashed] (10.75,11.5) -- (9,9.75);
    \draw [->, >=Stealth, dashed] (10.75,11.5) -- (12.5,9.75);
    \draw [->, >=Stealth, dashed] (16.75,11.5) -- (18.75,9.75);
    \draw [->, >=Stealth, dashed] (10.75,15.25) -- (10.75,14)node[pos=0.5, fill=white]{0.09};
    \draw [->, >=Stealth, dashed] (7.5,18.75) -- (9.75,17.25);
    \end{circuitikz}
}
\caption{Simple MDP example to illustrate differences between structural and probabilistic notions of influence. The observed path is given in red.}
\label{fig: prob influence example}
\end{figure}

\section{Example of Influence-Constrained Counterfactual MDP}
Take the counterfactual MDP example from Figures~\ref{fig:one-step} and ~\ref{fig:two-step}. This has two possible actions, $a_0$ and $a_1$, at states $s_0$ and $s_3$, and only action $a_0$ at the rest of the states. The full transition table for the nominal MDP is given in Table \ref{tab: transition table}. The support of each state-action pair is the set of states that can be reached (with non-zero probability) from the state-action pair, and is given in Table \ref{tab: supports table}.

\begin{table}[h]
\centering
\resizebox{0.5\textwidth}{!}{ %
    \begin{tabular}{|c|c|c|c|}
    \hline
    \textbf{State} & \textbf{Action} & \textbf{Next State} & \textbf{Transition Probability} \\ \hline
    $s_0$          & $a_0$           & $s_2$               & 0.5                             \\ \hline
    $s_0$          & $a_0$           & $s_3$               & 0.5                             \\ \hline
    $s_0$          & $a_1$           & $s_1$               & 1.0                             \\ \hline
    $s_1$          & $a_0$           & $s_4$               & 1.0                             \\ \hline
    $s_2$          & $a_0$           & $s_5$               & 1.0                             \\ \hline
    $s_3$          & $a_0$           & $s_5$               & 1.0                             \\ \hline
    $s_3$          & $a_1$           & $s_6$               & 1.0                             \\ \hline
    $s_4$          & $a_0$           & $s_7$               & 1.0                             \\ \hline
    $s_5$          & $a_0$           & $s_7$               & 1.0                             \\ \hline
    $s_6$          & $a_0$           & $s_7$               & 1.0                             \\ \hline
    \end{tabular}
}
\caption{Nominal MDP transition table}
\label{tab: transition table}
\end{table}

\begin{table}[h]
\centering
\resizebox{0.25\textwidth}{!}{ %
    \begin{tabular}{|c|c|c|}
    \hline
    \textbf{State} & \textbf{Action} & \textbf{Support} \\ \hline
    $s_0$          & $a_0$           & $\{s_2, s_3\}$     \\ \hline
    $s_0$          & $a_0$           & $\{s_3\}$            \\ \hline
    $s_0$          & $a_1$           & $\{s_1\}$            \\ \hline
    $s_1$          & $a_0$           & $\{s_4\}$            \\ \hline
    $s_2$          & $a_0$           & $\{s_5\}$            \\ \hline
    $s_3$          & $a_0$           & $\{s_5\}$            \\ \hline
    $s_3$          & $a_1$           & $\{s_6\}$            \\ \hline
    $s_4$          & $a_0$           & $\{s_7\}$            \\ \hline
    $s_5$          & $a_0$           & $\{s_7\}$            \\ \hline
    $s_6$          & $a_0$           & $\{s_7\}$            \\ \hline
    \end{tabular}
}
\caption{Supports of state-action pairs in the nominal MDP}
\label{tab: supports table}
\end{table}

Given the observed path $\tau = [(s_0, a_0), (s_2, a_0), (s_5, a_0), (s_7, a_0)]$, we can now find the influence-constrained counterfactual MDP given $k=1$ and $k=2$. When $k=1$ (Figure~\ref{fig:one-step-app}), $(s_1, a_0)$ and $(s_3, a_1)$ are not influenced at $t=1$, because they have disjoint supports ($\{s_4\}$ and $\{s_6\}$ respectively) with the observed state-action pair $(s_2, a_0)$ (whose support is $\{s_5\}$). For the opposite reason, $(s_2, a_0)$ and $(s_3, a_0)$ are influenced at $t=1$, as the supports of $(s_2, a_0)$ and $(s_3, a_0)$ are both $\{s_5\}$.

$(s_4, a_0)$, $(s_5, a_0)$ and $(s_6, a_0)$ are all influenced at $t=2$, as their supports overlap with the observed pair $(s_5, a_0)$ (in fact, all of their supports are exactly $\{s_7\})$. However, even though $(s_4, a_0)$ and $(s_6, a_0)$ are influenced, they cannot be reached from any influenced state-action pairs, so are also removed from the influence-constrained counterfactual MDP: we say they are ``influenced but unreachable''.

Figure \ref{fig:two-step-app} depicts the case of 2-step influence. We note that $(s_6, a_0)$ is now reachable, because although the support of $(s_3, a_1)$ is disjoint from the support of the observed state-action pair $(s_2, a_0)$ (and so is not 1-step influenced), all transitions leading to $(s_3, a_1)$ and from the states that $(s_3, a_1)$ can reach are influenced, so $(s_3, a_1)$ is influenced at $t=1$ with 2-step influence. However, even though $(s_1, a_0)$ now becomes influenced at $t=1$, it cannot be reached by any influenced state-action pair, so $(s_1, a_0)$ and $(s_4, a_0)$ are influenced but unreachable.

\begin{figure}[ht]
    \centering
    \resizebox{0.45\textwidth}{!}{ %
    \begin{tikzpicture}
        \matrix [draw, below=1cm of current bounding box.south] {
            \node [main node, fill=black!50,label=right:Observed] (l1) {}; \pgfmatrixnextcell
            \node [main node, fill=black!25,label=right:Influenced and reachable] (l2) {}; \\
            \node [main node,label=right:Influenced but unreachable] (l3) {}; \pgfmatrixnextcell
            \node [main node, gray,label=right:Not $k$-step influenced] (l4) {}; \\
        };
    \end{tikzpicture}
    }

    \subfigure[$k=1$]{%
         \resizebox{0.25\textwidth}{!}{\begin{tikzpicture}[->,>=stealth,node distance=1.2cm,thick,main node/.style={circle,draw}, ]

      \node[main node, fill=black!50] (s0) {$s_0$};
      \node[main node, fill=black!50] (s2) [right of=s0] {$s_2$};
      \node[main node, gray] (s1) [below of=s2] {$s_1$};
      \node[main node, fill=black!25] (s3) [above of=s2] {$s_3$};
      \node [above] at (s3.north) (t1) {t=1};
      \node [] (t0) [left of=t1] {t=0}; %
      \node[main node] (s4) [right of=s1] {$s_4$};
      \node[main node, fill=black!50] (s5) [right of=s2] {$s_5$};
      \node[main node] (s6) [right of=s3] {$s_6$};
      \node [above] at (s6.north) (t2) {t=2};
      \node[main node, fill=black!50] (s7) [right of=s5] {$s_7$};
      \node [] (t3) [right of=t2] {t=3}; %
      \draw[gray] (s0) -- (s1) node[midway,right] {$a_1$};
      \draw[ultra thick] (s0) -- (s2) node[midway, above] {$a_0$};
      \draw (s0) -- (s3) node[midway,right] {$a_0$};
      \draw[gray] (s1) -- (s4) node[midway,above] {$a_0$};
      \draw[ultra thick] (s2) -- (s5) node[midway,above] {$a_0$};
      \draw[gray] (s3) -- (s6) node[midway, above] {$a_1$};
      \draw (s3) -- (s5) node[midway, right] {$a_0$};
      \draw (s4) -- (s7) node[midway, right] {$a_0$};
      \draw[ultra thick] (s5) -- (s7) node[midway, above] {$a_0$};
      \draw (s6) -- (s7) node[midway, right] {$a_0$};

    \end{tikzpicture}}
        \label{fig:one-step-app}
    }
    \hspace{0.05\textwidth} %
    \subfigure[$k=2$]{%
         \resizebox{0.25\textwidth}{!}{\begin{tikzpicture}[->,>=stealth,node distance=1.2cm,thick,main node/.style={circle,draw}]

      \node[main node, fill=black!50] (s0) {$s_0$};
      \node[main node, fill=black!50] (s2) [right of=s0] {$s_2$};
      \node[main node] (s1) [below of=s2] {$s_1$};
      \node[main node, fill=black!25] (s3) [above of=s2] {$s_3$};
      \node [above] at (s3.north) (t1) {t=1};
      \node [] (t0) [left of=t1] {t=0}; %

      \node[main node] (s4) [right of=s1] {$s_4$};
      \node[main node, fill=black!50] (s5) [right of=s2] {$s_5$};
      \node[main node, fill=black!25] (s6) [right of=s3] {$s_6$};
      \node [above] at (s6.north) {t=2};
      \node[main node, fill=black!50] (s7) [right of=s5] {$s_7$};
      \node [] (t3) [right of=t2] {t=3}; %
      \draw[gray] (s0) -- (s1) node[midway,right] {$a_1$};
      \draw[ultra thick] (s0) -- (s2) node[midway, above] {$a_0$};
      \draw (s0) -- (s3) node[midway,right] {$a_0$};
      \draw (s1) -- (s4) node[midway,above] {$a_0$};
      \draw[ultra thick] (s2) -- (s5) node[midway,above] {$a_0$};
      \draw (s3) -- (s6) node[midway, above] {$a_1$};
      \draw (s3) -- (s5) node[midway, right] {$a_0$};
      \draw (s4) -- (s7) node[midway, right] {$a_0$};
      \draw[ultra thick] (s5) -- (s7) node[midway, above] {$a_0$};
      \draw (s6) -- (s7) node[midway, right] {$a_0$};

\end{tikzpicture}}
        \label{fig:two-step-app}
    }

    \caption{Example counterfactual MDP given $k$-step influence. State-action pairs may or may not be influenced by the observed path, and states may or may not be reachable from other influenced state-action pairs.}
\end{figure}
\label{app: influence constrained mdps}

\newpage
\section{Algorithm for Constructing Influence-Constrained Counterfactual MDP}\label{app:pseudocode}
\begin{algorithm}
\caption{Find Optimal \((k, m)\)-CF Policy for a given MDP}
\begin{algorithmic}[1]

\STATE \textbf{Input:} MDP transition probabilities $P$, observed path $\tau$, counterfactual transition probabilities $P_{\mathcal{P},t,\tau}$, $k$, $m$
\STATE \textbf{Output:} Optimal $(k, m)$-CF policy $\pi^*$

\STATE $S^\tau \gets \emptyset$ \COMMENT{Initialise set of all states in support of each observed $(s_t, a_t)$}

\FOR{each state-action pair $(s_t, a_t)$ in the observed path}
    \STATE $S^\tau_t \gets$ all states in the support of $P(\cdot \mid s_t, a_t)$:  $\{s'\mid P(s' \mid s_t, a_t)>0\}$
    \STATE $S^\tau \gets S^\tau \cup S^\tau_t$
\ENDFOR

\STATE $S^{\tau, k} \gets \emptyset$ \COMMENT{Initialise set of all states which are $k$-step influenced}

\FOR{each state $s$ in $S^\tau$} 
    \STATE $S^{\tau, k} \gets S^{\tau, k} \cup \text{ReverseBFS}(s, k)$ \COMMENT{Reverse BFS with depth $k$}
\ENDFOR

\FOR{$t$ in range$(0, T-k+1)$} 
    \FOR{each $s$, $a$, $s'$} 
        \IF{$P(s' \mid s, a) > 0$ and $s' \not \in S^{\tau, k}$}
            \STATE $P_{\mathcal{P},t,\tau}(\cdot \mid s, a)=0$ \COMMENT{Prune non-influenced transitions}
        \ENDIF
    \ENDFOR
\ENDFOR

\STATE Further prune MDP to remove transitions leading to unreachable states and states with no outgoing edges

\STATE Compute the optimal $(k, m)$-CF policy using dynamic programming, while restricting action choices to transitions in the influence-constrained counterfactual MDP

\RETURN \COMMENT{Optimal $(k, m)$-CF policy}

\end{algorithmic}
\end{algorithm}

\section{Epidemic MDP}\label{app:epidemic_mdp}
The Epidemic MDP models how infection spreads through a given population $P$ of vaccinated and unvaccinated individuals. The MDP uses a hypergeometric distribution to model how many susceptible individuals become infected at each step. Each vaccination decreases the count $V$ and removes the vaccinated individual from the population (i.e., no re-infection is possible). The reward for each transition $(s, a, s')$ is given by the negative of the number of infected individuals in $s$, $-I$.

The MDP can be described as follows.

\paragraph{State Space} The state space consists of a tuple $(S, I, V)$ where:
\begin{itemize}
    \item $S$: number of individuals susceptible to the disease (ranging from 0 to $P$).
    \item {I}: number of individuals infected with the disease (ranging from 0 to $P$).
    \item {V}: number of vaccines available (ranging from 0 to $2 \times P$).
\end{itemize}

\paragraph{Initial State} The initial state $(S_0, I_0, V_0)$ consists of:
\begin{itemize}
    \item $S_0 = P - I_0$ (initially the entire population is unvaccinated).
    \item $I_0$ is chosen arbitrarily or can be taken from any chosen distribution. In our experiments, we set $I_0 = 1$.
    \item $V_0 = 2 \times P$.
\end{itemize}

\paragraph{Action Space} There are three possible actions at each time step:
\begin{itemize}
    \item $V_I$: vaccinate an infected individual.
    \item $V_S$: vaccinate a susceptible individual.
    \item $\mathit{Nil}$: do nothing.
\end{itemize}

\paragraph{Transition Probabilities} The transition probabilities are defined as follows. We assume that at each time step, individuals in $S_t$ can be infected following a hypergeometric model, i.e., a binomial without replacement.

\begin{itemize}
    \item For the action $\mathit{Nil}$:
    \begin{itemize}
        \item $P(S_{t+1}, I_{t+1}, V_{t+1} \mid S_t, I_t, V_t, NIL) = 0$ if $V_{t+1} \neq V_t$.
        \item For $k \leq S_t$, $P(S_{t+1}-k, I_{t+1}+k, V_{t+1} \mid S_t, I_t, V_t, \text{NIL}) = \text{hypergeom}(M, n, N).\text{pmf}(k)$ if $V_{t+1} = V_t$, where $M = S_t + I_t$, $n = \min(S_t, I_t)$, $N = S_t$.
    \end{itemize}

    \item For the action $V_I$:
    \begin{itemize}
        \item $P(S_{t+1}, I_{t+1}, V_{t+1} \mid S_t, I_t, V_t, V_I) = 0$ if $V_{t+1} \neq V_t-1$.
        \item For $k \leq S_t$, $P(S_{t+1}-k, I_{t+1}-1+k, V_{t+1} \mid S_t, I_t, V_t, V_I) = \text{hypergeom}(M, n, N).\text{pmf}(k)$ if $V_{t+1} = V_t-1$, where $M = S_t + I_t - 1$, $n = \min(S_t, I_t-1)$, $N = S_t$.
    \end{itemize}

    \item For the action $V_S$:
    \begin{itemize}
        \item $P(S_{t+1}, I_{t+1}, V_{t+1} \mid S_t, I_t, V_t, V_S) = 0$ if $V_{t+1} \neq V_t-1$.
        \item For $k \leq S_t-1$, $P(S_{t+1}-k-1, I_{t+1}+k, V_{t+1} \mid S_t, I_t, V_t, V_S) = \text{hypergeom}(M, n, N).\text{pmf}(k)$ if $V_{t+1} = V_t-1$, where $M = S_t + I_t - 1$, $n = \min(S_t-1, I_t)$, $N = S_t-1$.
    \end{itemize}
\end{itemize}

\paragraph{Rewards} The reward function at each time step $t$ is defined as the negative of the number of infected individuals, $R_t = -I_t$.

\newpage
\section{Size of State Space of Pruned Counterfactual MDPs, Given \textit{k}-step Influence}
\label{app:state spaces}

\begin{table}[h]
    \centering
    \caption{Grid World: Size of the State Space of Pruned Counterfactual MDP, Given $k$-step influence}
    \begin{tabular}{|c|c|c|c|c|c|c|c|c|c|c|c|c|c|c|}
        \hline
        $k$ & 1 & 2 & 3 & 4 & 5 & 6 & 7 & 8 & 9 & 10 & 11 & T+1 & |S|\\
        \hline
        State Space & 12 & 12 & 15 & 15 & 15 & 15 & 147 & 161 & 173 & 182 & 188 & 192 & 16 \\
        \hline
    \end{tabular}
    \label{tab:gridworld_state_space_after_pruning}
\end{table}

\begin{table}[h]
\centering
    \caption{Epidemic: Size of the State Space of Pruned Counterfactual MDP, Given $k$-step influence}
    \begin{tabular}{|c|c|c|c|c|c|c|c|c|c|c|c|c|c|c|c|}
        \hline
        $k$ & 1 & 2 & 3 & 4 & 5 & 6 & 7 & T+1 &|S| \\
        \hline
        State Space & 8 & 32 & 43 & 59 & 91 & 157 & 210 & 19355 & 2541 \\
        \hline
    \end{tabular}

    \label{tab:epidemic_state_space_after_pruning}
\end{table}

\begin{table}[h]
    \caption{Sepsis: Size of the State Space of Pruned Counterfactual MDP, for the Catastrophic Path, Given $k$-step influence}
    \centering
    \begin{tabular}{|c|c|c|c|c|c|c|c|c|c|c|c|c|c|c|}
        \hline
        $k$ & 1 & 2 & 3 & 4 & 5 & 6 & 7 & 8 & 9 & 10 & T+1 & |S| \\
        \hline
        State Space & 11 & 14 & 14 & 2123 & 2896 & 3477 & 4055 & 4647 & 5291 & 5884 & 6996 & 1440 \\
        \hline
    \end{tabular}
    \label{tab:sepsis_state_space_after_pruning}
\end{table}

\section{Training Details}\label{app:execution times}
\label{Training Details}
Our algorithm was implemented in Python 3.10 and executed on a 128-core machine with an Intel Xeon CPU and 512 GB RAM, but only 32 threads were required to calculate the counterfactual transition probabilities, which was the only parallelised part of the algorithm.

The Grid World case study was relatively quick as this has a relatively small state space: for a fixed choice of $(k, m)$, deriving the (pruned) counterfactual MDP and computing the optimal policy runs in the order of minutes. The Epidemic and Sepsis case studies have larger state spaces, and so it took several hours to derive the counterfactual MDP and run policy iteration for every combination of $(k, m)$.

\section{Discussion on Choice of $k$}
\label{app: choice of k}
\jl{The parameter $k$ sets the level of influence that we consider `sufficient’ for our counterfactual paths to be informed by the observation. The choice of $k$ is domain-dependent, and there may not necessarily be a ``correct” value of $k$. Instead, we consider $k$ to be a design choice. For example, in healthcare and other safety-critical domains, it is desirable that any counterfactual path (which will be used as a counterfactual explanation for how the current treatment policy could be improved) is well informed by the observation: this would naturally lead to choosing low $k$ values. However, for less safety-critical domains, we are more concerned about optimising the reward, at the risk of doing so over non-influenced paths (i.e., paths that are not tailored to the observation). In such cases, we may want to choose higher $k$ values, (e.g., the smallest $k$ s.t. the counterfactual reward meets some threshold).

In some domains, it may be possible to select an appropriate $k$ for a particular observation. For example, in the Sepsis MDP experiment, you can identify counterfactual policies and counterfactual paths for different values of $k$, and a domain expert (e.g., a clinician) could assess these paths and identify whether they would be realistic or unrealistic for the observed patient, thereby allowing us to identify counterfactual policies that are tailored to the individual. For example, in our Sepsis example in Figure \ref{fig:sepsis-mdp-with-trajectories}, a clinician might be able to tell that the unconstrained counterfactual path is unrealistic for the observed trajectory of the diabetic patient, e.g., because the patient’s blood sugar levels in the unconstrained counterfactual path look ``too stable” for a diabetic patient with Sepsis. By comparing counterfactual trajectories generated at different levels of $k$, the clinician may be able to set a maximum $k$ below which the counterfactual paths appear reliable, based on their expert knowledge and experience.

However, in other domains, it may be more challenging to evaluate whether a counterfactual path remains tailored to the particular observation. Here, the choice of $k$ will depend on the given task. If adherence to the observation is important (e.g., personalised therapy), a small value of $k$ may be preferable to ensure that any policy changes remain informed by the original observation. On the other hand, in less safety-critical domains, we may choose a large value of $k$ to allow for larger deviations from the observation, particularly if the observed path was catastrophic, in the hopes of achieving higher rewards.
}

\end{document}